\newcites{supp}{Supplementary references}
\algnewcommand{\IfThen}[2]{
  \State \algorithmicif\ #1\ \algorithmicthen\ #2}
\newtheorem{lemma}{Lemma}
\newtheorem{corollary}{Corollary}
\theoremstyle{plain}
\newenvironment{proofoutline}
 {\proof[Proof outline]}
 {\endproof}
\DeclareMathOperator{\E}{\mathbb{E}}
\DeclareMathOperator*{\argmin}{argmin}
\newcommand{\suchthat}{\;\ifnum\currentgrouptype=16 \middle\fi|\;}
\newcommand{\norm}[1]{\left\lVert #1 \right\rVert}
\newcommand{\Exp}[1]{\E \left[ #1 \right]}
\title{Adaptive Importance Sampling for Finite-Sum Optimization and Sampling with Decreasing Step-Sizes}
\author{%
  Ayoub El Hanchi\\
  McGill University\\
  \texttt{ayoub.elhanchi@mail.mcgill.ca} \\
  \And
  David A. Stephens\\
  McGill University\\
  \texttt{david.stephens@mcgill.ca} \\
}
\begin{document}

\maketitle

\begin{abstract}
  Reducing the variance of the gradient estimator is known to improve the convergence rate of stochastic gradient-based optimization and sampling algorithms. One way of achieving variance reduction is to design importance sampling strategies. Recently, the problem of designing such schemes was formulated as an online learning problem with bandit feedback, and algorithms with sub-linear \emph{static} regret were designed. In this work, we build on this framework and propose \emph{Avare}, a simple and efficient algorithm for adaptive importance sampling for finite-sum optimization and sampling with decreasing step-sizes. Under standard technical conditions, we show that \emph{Avare} achieves $\mathcal{O}(T^{2/3})$ and $\mathcal{O}(T^{5/6})$  \emph{dynamic} regret for SGD and SGLD respectively when run with $\mathcal{O}(1/t)$ step sizes. We achieve this dynamic regret bound by leveraging our knowledge of the dynamics defined by the algorithm, and combining ideas from online learning and variance-reduced stochastic optimization. We validate empirically the performance of our algorithm and identify settings in which it leads to significant improvements.
\end{abstract}

\section{Introduction}
Functions $f: \mathbb{R}^{d} \rightarrow \mathbb{R}$ of the form:
\begin{equation}
    f(x) = \sum_{i=1}^{N} f_i(x)
    \label{finite_sum}
\end{equation}
are prevalent in modern machine learning and statistics. Important examples include the empirical risk in the empirical risk minimization framework, \footnote{Up to a normalizing factor of $\frac{1}{N}$ which does not affect the optimization}
or the log-posterior of an exchangeable Bayesian model. When $N$ is large, the preferred methods for solving the resulting optimization or sampling problem usually rely on stochastic estimates of the gradient of $f$, using variants of  stochastic gradient descent (SGD) \cite{robbins1951} or stochastic gradient Langevin dynamics (SGLD) \cite{DBLP:conf/icml/WellingT11}:
\begin{align}
    x^{SGD}_{t+1} &= x^{SGD}_{t} - \alpha_t N \nabla f_{I_t}(x^{SGD}_t) \label{SGD} \\
    x^{SGLD}_{t+1} &= x^{SGLD}_t - \alpha_t N \nabla f_{I_t}(x^{SGLD}_t) + \xi_t \quad\quad \xi_t \sim \mathcal{N}(0, 2\alpha_t) \label{SGLD}
\end{align}
where $\{\alpha_t\}_{t=1}^{T}$ is a sequence of step-sizes, and the index $I_t$ is sampled uniformly from $[N]$, making $N \nabla f_{I_t}(x)$ an unbiased estimator of the gradient of $f$. We use $\{x_t\}_{t=1}^{T}$ to refer to either sequence when we do not wish to distinguish between them. It is well known that the quality of the answers given by these algorithms depends on the (trace of the) variance of the gradient estimator, and considerable efforts have been made to design methods that reduce this variance. 

In this paper, we focus on the importance sampling approach to achieving variance reduction. At each iteration, the algorithm samples $I_t$ according to a specified distribution $p^t$, and estimates the gradient using:
\begin{equation}
\label{estimator}
    \hat{g}^t := \frac{1}{p^t_{I_t}} \nabla f_{I_t}(x_t)
\end{equation}
It is immediate to verify that $\hat{g}^t$ is an unbiased estimator of $g^t := \nabla f(x_t)$. By cleverly choosing the distributions $p^t$, one can achieve significant variance reduction (up to a factor of N) compared to the estimator based on uniform sampling. Unfortunately, computing the variance-minimizing distributions at each iteration requires the knowledge of the Euclidean norm of all the individual gradients $g_i^t := \nabla f_i(x_t)$ at each iteration, making it unpractical \cite{DBLP:conf/nips/NeedellWS14, DBLP:conf/icml/ZhaoZ15}.

Many methods have been proposed that attempt to construct sequences of distributions $\{p^t\}_{t=1}^{T}$ that result in efficient estimators \cite{DBLP:conf/nips/NeedellWS14, DBLP:conf/icml/ZhaoZ15, DBLP:journals/corr/BouchardTPG15, DBLP:conf/nips/StichRJ17, DBLP:conf/icml/KatharopoulosF18, DBLP:conf/nips/JohnsonG18, DBLP:journals/jmlr/CsibaR18}.
Of particular interest to us, the task of designing such sequences was recently cast as an online learning problem with bandit feedback \cite{DBLP:conf/icml/NamkoongSYD17, DBLP:journals/corr/abs-1708-02544, DBLP:conf/colt/Borsos0L18, DBLP:conf/icml/BorsosCL019}. In this formulation, one attempts to design algorithms with sub-linear \emph{expected} static regret, which is defined as:
\begin{equation*}
    \text{Regret}_{S}(T) := \sum_{t=1}^{T} c_t(p^t) - \min_{p \in \Delta} \sum_{t=1}^{T} c_t(p)
\end{equation*}

where $\Delta$ denotes the probability simplex in $\mathbb{R}^N$, and $c_t(p)$ is the trace of the covariance matrix of the gradient estimator (\ref{estimator}), which is easily shown to be:
\begin{equation}
\label{cost}
    c_t(p) := \sum_{i=1}^{N} \frac{1}{p_i} \norm{g_i^t}_2^2 - \norm{g^t}_2^2
\end{equation}
Note that the second term cancels in the definition of regret, and we omit it in the rest of our discussion.
In this formulation, and to keep the computational load manageable, one has only access to partial feedback in the form of the norm of the $I_t^{th}$ gradient, and not to the complete cost function (\ref{cost}).
Under the assumption of uniformly bounded gradients, the best result in this category can be found in \cite{DBLP:conf/colt/Borsos0L18} where an algorithm with $\Tilde{O}(T^{2/3})$ static regret is proposed. A more difficult but more natural performance measure that makes the attempt to approximate the optimal distributions explicit is the \emph{dynamic} regret, defined by:
\begin{equation}
\label{dynmaicregret}
    \text{Regret}_{D}(T) := \sum_{t=1}^{T} c_t(p^t) - \sum_{t=1}^{T} \min_{p \in \Delta} c_t(p)
\end{equation}
Guarantees with respect to the expected dynamic regret are more difficult to obtain, and require that the cost functions $c_t(p)$ do not change too rapidly with respect to some reasonable measure of variation. See \cite{DBLP:journals/ior/BesbesGZ15, DBLP:journals/corr/abs-1708-03020, DBLP:conf/cdc/MokhtariSJR16, DBLP:conf/icml/YangZJY16, DBLP:conf/nips/ZhangLZ18} for examples of such measures and the corresponding regret bounds for general convex cost functions.

In this work, we propose \emph{Avare}, an algorithm that achieves sub-linear dynamic regret for both SGD and SGLD when the sequence of step-sizes $\{\alpha_t\}_{t=1}^{T}$ is decreasing. The name \emph{Avare} is derived from adaptive variance minimization. Specifically, our contributions are as follows:
\begin{itemize}
    \item We show that \emph{Avare} achieves $\mathcal{O}(T^{2/3})$ and $\mathcal{O}(T^{5/6})$ dynamic regret for SGD and SGLD respectively when $\alpha_t$ is $\mathcal{O}(1/t)$.
    \item We propose a new mini-batch estimator that combines the benefits of sampling without replacement and importance sampling while preserving unbiasedness.
    \item We validate empirically the performance of our algorithm and identify settings in which it leads to significant improvements.
\end{itemize}

We would like to point out that while the decreasing step size requirement might seem restrictive, we argue that for SGD and SGLD, it is the right setting to consider for variance reduction. Indeed, it is well known that under suitable technical conditions, both algorithms converge to their respective solutions exponentially fast in the early stages. Variance reduction is primarily useful at later stages when the noise from the stochastic gradient dominates. In the absence of control variates, one is forced to use decreasing step-sizes to achieve convergence. This is precisely the regime we consider.


\section{Related work}
\label{sec2}
It is easy to see that the cost functions (\ref{cost}) are convex over the probability simplex.
A celebrated algorithm for convex optimization over the simplex is entropic descent \cite{Beck2003}, an instance of mirror descent \cite{Nemirovsky1984} where the Bregman divergence is taken to be the relative entropy. A slight modification of this algorithm for online learning is the EXP3 algorithm \cite{DBLP:journals/siamcomp/AuerCFS02} which mixes the iterates of entropic descent with a uniform distribution to avoid the assignment of null probabilities. See \cite{DBLP:conf/icml/Zinkevich03, DBLP:journals/ftml/Shalev-Shwartz12, DBLP:journals/ftopt/Hazan16} for a more thorough discussion of online learning (also called online convex optimization) in general and variants of this algorithm in particular. 

Since we are working over the simplex, the EXP3 algorithm is a natural choice. This is the approach taken in \cite{DBLP:conf/icml/NamkoongSYD17} and \cite{DBLP:journals/corr/abs-1708-02544}, although strictly speaking neither is able to prove sub-linear static regret bounds. The difficulty comes from the fact that the norm of the gradients of the cost functions (\ref{cost}) explode to infinity on the boundary of the simplex. This is amplified by the use of stochastic gradients which grow as $1/p_{min}^{5}$ in expectation, making it very difficult to reach regions near the boundary. Algorithms based on entropic descent for dynamic regret problems also exist, including the fixed share algorithm and projected mirror descent \cite{HERBSTER1995286, 10.1162/153244301753683726, DBLP:conf/nips/Cesa-BianchiGLS12, DBLP:conf/icml/GyorgyS16}. Building on these algorithms, and by artificially making the cost functions strongly-convex to allow the use of decreasing step-sizes, we were only able to show $\Tilde{O}(T^{7/8})$ dynamic regret using projected mirror descent for SGD with $O(1/t)$ decreasing step sizes and uniformly bounded gradients.

The approach taken in \cite{DBLP:conf/colt/Borsos0L18} is qualitatively different and is based on the follow-the-regularized-leader (FTRL) scheme. By solving the successive minimization problems stemming from the FTRL scheme analytically, the authors avoid the above-mentioned issue of exploding gradients. The rest of their analysis relies on constructing an unbiased estimate of the cost functions (\ref{cost}) using only the partial feedback $\norm{g_{I_t}^t}_2^2$, and probabilistically bounding the deviation of the estimated cost functions from the true cost functions using a martingale concentration inequality. The final result is an algorithm that enjoys an $\tilde{O}(T^{2/3})$ static regret bound.

Our approach is similar to the one taken in \cite{DBLP:conf/colt/Borsos0L18} in that we work directly with the cost functions and not their gradients to avoid the exploding gradient issue. Beyond this point however, our analysis is substantially different. While we are still working within the online learning framework, our analysis is more closely related to the analysis of variance-reduced stochastic optimization algorithms that are based on control variates. In particular, we study a Lyapunov-like functional and show that it decreases along the trajectory of the dynamics defined by the algorithms. This yields simpler and more concise proofs, and opens the door for a unified analysis.

\section{Algorithm}
\label{sec3}
Most of the literature on online convex optimization with dynamic regret relies on the use of the gradients of the cost functions \cite{DBLP:journals/ior/BesbesGZ15, DBLP:journals/corr/abs-1708-03020, DBLP:conf/cdc/MokhtariSJR16, DBLP:conf/icml/YangZJY16, DBLP:conf/nips/ZhangLZ18}. However, as explained in the previous section, such approaches are not viable for our problem. Unfortunately, the regret guarantees obtained from the FTRL scheme for static regret used in \cite{DBLP:conf/colt/Borsos0L18} do not directly translate into guarantees for dynamic regret.

We start by presenting the high level ideas that go into the construction of our algorithm. We then present our algorithm in explicit form, and discuss its implementation and computational complexity.

\subsection{High level ideas}
The simplest update rule one might consider when working with dynamic regret is a natural extension of the follow-the-leader approach:
\begin{equation}
\label{FTLL}
    p^{t+1} := \argmin_{p \in \Delta} \left\{c_t(p)\right\}
\end{equation}
Intuitively, if the cost functions do not change too quickly, then it is reasonable to expect good performance from this algorithm. In our case however, we do not have access to the full cost function. The traditional way of dealing with this problem is to build unbiased estimates of the cost functions using the bandit feedback, and then bounding the deviation of these unbiased estimates from the true cost functions. While this might work, we consider here a different approach based on constructing surrogate cost functions that are not necessarily unbiased estimates of the true costs.

For each $i \in [N]$, denote by $h_i^t$ the last observed gradient of $f_i$ at time $t$, with $h_{i}^{1}$ initialized arbitrarily (to $0$ for example). We consider the following surrogate cost for all $t \in [T]$:
\begin{equation}
    \label{surrogatecost}
    \widetilde{c}_t(p) := \sum_{i=1}^{N} \frac{1}{p_i} \norm{h_i^t}_2^2
\end{equation}
As successive iterates of the algorithm become closer and closer to each other, the squared norm of the $h_i^t$s become better and better approximations to the squared norm of the $g_i^t$s, thereby making the surrogate costs (\ref{surrogatecost}) accurate approximations of the true costs (\ref{cost}). The idea of using previously seen gradients to approximate current gradients is inspired by the celebrated variance-reduced stochastic optimization algorithm SAGA \cite{DBLP:journals/corr/DefazioBL14}.

We are now almost ready to formulate our algorithm. If we try to directly minimize the surrogate cost functions over the entire simplex at each iteration as in (\ref{FTLL}), we might end up assigning null or close to null probabilities to certain indices. Depending on how far we are in the algorithm, this might or might not be a problem. In the initial stages, this is clearly an issue since this might only be an artifact of the initialization (notably when we initialize the $h_{i}^{1}$ to $0$), or an accurate representation of the current norm of the gradient, but which might not be representative later on as the algorithm progresses. On the other hand, in the later stages of the algorithm, the cost functions are nearly identical, so that an assignment of near zero probabilities is a reflection of the true optimal probabilities, and is not necessarily problematic.

The above discussion suggests the following algorithm. Define the $\varepsilon$-restricted probability simplex to be:
\begin{equation}
\label{restrictedsimplex}
    \Delta(\varepsilon) := \left\{p \in \mathbb{R}^N \mid p_i \geq \varepsilon, ~ \sum_{i=1}^{N}p_i = 1 \right\}
\end{equation}
And let $\{\varepsilon_t\}_{t=1}^{T}$ be a decreasing sequence of positive numbers with $\varepsilon_1 \leq \frac{1}{N}$. Then we propose the following algorithm:
\begin{equation}
\label{bestalgever}
    p^{t} := \argmin_{p \in \Delta(\varepsilon_{t})} \left\{\widetilde{c}_t(p) \right\}
\end{equation}
Our theoretical analysis in Section \ref{sec4} suggests a specific decay rate for the sequence $\{\varepsilon_t\}_{t=1}^{T}$ that depends on the sequence of step-sizes $\{\alpha_t\}_{t=1}^{T}$ and whether SGD or SGLD is run.

\subsection{Explicit form}
Equation (\ref{bestalgever}) defines our sequence of distribution $\{p^{t}\}_{t=1}^{T}$, but the question remains whether we can solve the implied optimization problems efficiently. In this section we answer this question in the affirmative, and provide an explicit algorithm for the computation of the sequence $\{p^{t}\}_{t=1}^{T}$.

We state our main result of this section in the following lemma. The proof can be found in appendix \ref{appendix1}.
\begin{restatable}{lemma}{solutionoptproblem}
\label{solutionoptproblem}
    Let $\{a_i\}_{i=1}^{N}$ be a non-negative set of numbers where at least one of the $a_i$s is strictly positive, and let $\varepsilon \in [0, 1/N]$. Let $\pi: [N] \rightarrow [N]$ be a permutation that orders $\{a_i\}_{i=1}^{N}$ in a decreasing order ($a_{\pi(1)} \geq a_{\pi(2)} \geq \dots \geq a_{\pi(N)}$). Define:
    \begin{equation}
        \label{threshold}
        \rho:= \max \left\{i \in [N] \suchthat a_{\pi(i)} \geq \varepsilon \frac{\sum_{j=1}^{i} a_{\pi(j)}}{1-(N-i)\varepsilon}  \right\}
    \end{equation}
    and:
    \begin{equation}
        \label{normalizationconstant}
        \lambda := \frac{\sum_{j=1}^{\rho} a_{\pi(j)}}{1-(N-\rho)\varepsilon}
    \end{equation}
    Then a solution of the optimization problem:
    \begin{equation}
    \label{optproblem}
        \min_{p \in \Delta(\varepsilon)} \sum_{i=1}^{N} \frac{1}{p_i} a_i^2
    \end{equation}
    is given by:
    \begin{equation}
    \label{explicitsol}
        p^{*}_i = \begin{cases}
        a_i/\lambda &\text{if $i \in \{\pi(1), \dots, \pi(\rho)\}$}\\
        \varepsilon &\text{otherwise}
        \end{cases}
    \end{equation}
    In the case all $a_i$ are zero, any $p \in \Delta(\varepsilon)$ is a solution.
\end{restatable}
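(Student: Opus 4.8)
The plan is to recognize (\ref{optproblem}) as a convex program and to exhibit a candidate point together with dual multipliers satisfying the Karush--Kuhn--Tucker (KKT) conditions. Since the objective $p \mapsto \sum_i a_i^2/p_i$ is convex on the positive orthant and the feasible set $\Delta(\varepsilon)$ is convex with only affine constraints, producing such a KKT triple certifies \emph{global} optimality with no constraint qualification required. I would first dispose of the degenerate case in which every $a_i$ vanishes, where the objective is identically zero and the claim is trivial, and henceforth assume $a_{\pi(1)} > 0$.

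Setting up the Lagrangian with a multiplier $\mu$ for $\sum_i p_i = 1$ and multipliers $\nu_i \geq 0$ for the constraints $p_i \geq \varepsilon$, stationarity reads $a_i^2/p_i^2 = -\mu - \nu_i$. On the indices where the box constraint is inactive ($\nu_i = 0$) this forces $p_i = a_i/\lambda$ with $\lambda := \sqrt{-\mu}$, while the remaining indices sit at $p_i = \varepsilon$; this already exposes the two-regime structure of (\ref{explicitsol}). Because larger $a_i$ should receive larger mass, the free indices ought to be the top ones in the decreasing order, and imposing $\sum_i p_i = 1$ over a candidate set consisting of the $\rho$ largest indices yields precisely the normalization (\ref{normalizationconstant}). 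What remains is to pin down the correct cutoff $\rho$ and to verify that the induced multipliers are feasible.

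The technical heart is to show that the threshold (\ref{threshold}) produces a valid split. Writing $S_i := \sum_{j \leq i} a_{\pi(j)}$, $D_i := 1-(N-i)\varepsilon$ and $\lambda_i := S_i/D_i$, a short computation that clears denominators and uses $D_i = D_{i-1} + \varepsilon$ gives the equivalence $a_{\pi(i)} \geq \varepsilon\lambda_i \iff a_{\pi(i)} \geq \varepsilon\lambda_{i-1}$ (and likewise with strict inequalities). Taking $\lambda = \lambda_\rho$, I would first check primal feasibility of the free block, $p^{*}_{\pi(i)} = a_{\pi(i)}/\lambda \geq \varepsilon$ for $i \leq \rho$: the defining inequality of (\ref{threshold}) holds at $i = \rho$, giving $a_{\pi(\rho)} \geq \varepsilon\lambda$, and monotonicity of the sorted sequence propagates this to every $i \leq \rho$. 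Dual feasibility on the active block, namely $\nu_i = \lambda^2 - a_{\pi(i)}^2/\varepsilon^2 \geq 0$, is equivalent to $a_{\pi(i)} \leq \varepsilon\lambda$ for $i > \rho$; here I would invoke the maximality of $\rho$, which makes the defining inequality \emph{fail} at $\rho+1$, and the same algebraic manipulation converts $a_{\pi(\rho+1)} < \varepsilon\lambda_{\rho+1}$ into $a_{\pi(\rho+1)} < \varepsilon\lambda_\rho = \varepsilon\lambda$, after which monotonicity covers all $i > \rho$ (the case $\rho = N$ has an empty active block and needs no such check). Complementary slackness holds by construction, and stationarity holds with $\mu = -\lambda^2$, completing the KKT verification.

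I expect the main obstacle to be exactly this interplay between the maximality of $\rho$ and dual feasibility: one must relate the threshold evaluated at the cutoff $\rho$ (through $\lambda_\rho$) to the threshold that fails at $\rho+1$ (through $\lambda_{\rho+1}$), which is not immediate since both the numerator $S_i$ and the denominator $D_i$ vary with $i$. The equivalence $a_{\pi(i)} \geq \varepsilon\lambda_i \iff a_{\pi(i)} \geq \varepsilon\lambda_{i-1}$ is what resolves this cleanly. Two small well-posedness checks round out the argument, and both rely on the hypothesis $\varepsilon \leq 1/N$: it guarantees $D_i > 0$ for every $i$, so the $\lambda_i$ are finite and positive, and it guarantees that $i = 1$ satisfies (\ref{threshold}) (the inequality there reduces to $1 \geq N\varepsilon$), so that the set defining $\rho$ is nonempty and $\rho \geq 1$.
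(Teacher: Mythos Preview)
Your proposal is correct and follows essentially the same route as the paper: both dispose of the all-zero case, check that $\rho$ is well defined via $\varepsilon\le 1/N$, and then certify optimality by exhibiting KKT multipliers ($\nu^{*}=\lambda^{2}$ on the equality constraint and $\mu_i^{*}=\lambda^{2}-a_i^{2}/\varepsilon^{2}$ on the active box constraints, up to your swap of the names $\mu\leftrightarrow\nu$). Your abstract equivalence $a_{\pi(i)}\ge\varepsilon\lambda_i\iff a_{\pi(i)}\ge\varepsilon\lambda_{i-1}$ is exactly the paper's four-line computation that turns the failure of (\ref{threshold}) at $\rho+1$ into $a_{\pi(\rho+1)}<\varepsilon\lambda_\rho$, just packaged more cleanly; the primal- and dual-feasibility checks are otherwise identical.
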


In light of Lemma \ref{solutionoptproblem}, to compute $p^t$ as defined in (\ref{bestalgever}), it is enough to know the value of $\rho_t$ as defined in (\ref{threshold}), replacing $a_i$ with $\norm{h_i^t}_2$ and $\varepsilon$ with $\varepsilon_t$. Using $\rho_t$ we can then compute $\lambda_t$ using (\ref{normalizationconstant}), and obtain $p^t$ from (\ref{explicitsol}). It remains to specify an efficient way to perform this computation.

\subsection{Implementation details}
The naive way to perform the above computation is to do the following at each iteration:
\begin{itemize}
    \item Sort $\{\norm{h_i^t}_2\}_{i=1}^{N}$ in decreasing order.
    \item Find $\rho_t$ by traversing $(\pi(i))_{i=1}^{N}$ in increasing order and finding the first $i \in [N]$ for which the inequality in (\ref{threshold}) does not hold.
    \item Explicitly compute the probabilities using
    (\ref{normalizationconstant}) and (\ref{explicitsol}).
    \item Sample from $p^t$ using inverse transform sampling.
\end{itemize}
This has complexity $O(N\log{N})$. In appendix \ref{appendix1}, we present an algorithm that requires only $O(N)$ vectorized operations, $O(\log^2 N)$ sequential (non-vectorized) operations, and $cN$ memory for small $c$. The algorithm uses three data structures: 
\begin{itemize}
    \item An array storing $\{\norm{h_i^t}_2\}_{i=1}^{N}$ unsorted.
    \item An order statistic tree storing the pairs $(\norm{h_i^t}_2, i)_{i=1}^{N}$ sorted according to $\norm{h_i^t}_2$.
    \item An array storing $\{\sum_{j=1}^{i}\norm{h_{\pi(j)}^t}_2\}_{i=1}^{N}$ where $\pi$ is the permutation that sorts $\{\norm{h_i^t}_2\}_{i=1}^{N}$ in the order statistic tree.
\end{itemize}
The order statistic tree along with the array storing the cumulative sums allows the retrieval of $\rho_t$ in $O(\log^2 N)$ time. The cumulative sums allow to sample from $p^t$ in $O(\log N)$ time using binary search, and maintaining them is the only operation that requires a vectorized $O(N)$ operation. All other operations run in $O(\log N)$ time. See appendix \ref{appendix1} for the full version of the algorithm and a more complete discussion of its computational complexity.

\section{Theory}
\label{sec4}
In this section, we prove a sub-linear dynamic regret guarantee for our proposed algorithm when used with SGD and SGLD with decreasing step-sizes. We present our results in a more general setting, and show that they apply to our cases of interest. We start by stating our assumptions:
\begin{restatable}{assumption}{boundedgradients} 
\label{boundedgradients}
(Bounded gradients)
There exists a $G > 0$ such that $\norm{\nabla f_i(x)}_2 \leq G$ for all $x \in \mathbb{R}^d$ and for all $i \in [N]$.
\end{restatable}
\begin{restatable}{assumption}{smoothness} 
\label{smoothness}
(Smoothness)
There exists an $L > 0$ such that $\norm{\nabla f_i(x) - \nabla f_i(y)}_2 \leq L \norm{x - y}_2$ for all $x, y \in \mathbb{R}^d$ and for all $i \in [N]$.
\end{restatable}
\begin{restatable}{assumption}{contraction} 
\label{contraction}
(Contraction of the iterates in expectation)
There exists constants $A \geq 0$, $B \geq 1$ and $\delta \in (0,1]$ such that $\Exp{\norm{x_{t+1} - x_{t}}_2 \suchthat I_1, \dots, I_{t-1}} \leq A/(B+t-1)^{\delta}$ for all $t \in [T]$.
\end{restatable}
The bounded gradients assumption has been used in all previous related work \cite{DBLP:conf/icml/NamkoongSYD17, DBLP:journals/corr/abs-1708-02544, DBLP:conf/colt/Borsos0L18, DBLP:conf/icml/BorsosCL019}, although it is admittedly quite strong. The smoothness assumption is standard in the study of optimization and sampling algorithms. Note that we chose to state our assumptions using index-independent constants to make the presentation clearer and since this does not affect our derivation of the sequence $\{\varepsilon_t\}_{t=1}^{T}$.

Finally, Assumption \ref{contraction} should really be derived from more basic assumptions, and is only stated to allow for a unified analysis. Note that in the optimization case this is a very mild assumption since we are generally only interested in convergent sequences, and for any such sequence with reasonably fast convergence this assumption holds. The following proposition shows that Assumption \ref{contraction} holds for our cases of interest. All the proofs for this section can be found in appendix \ref{appendix2}.

\begin{restatable}{proposition}{contractionsgdsgld}
\label{contractionsgdsgld}
For any choice of $\{p^t\}_{t=1}^{T}$, the iterates of SGD (\ref{SGD}) with the gradient estimator (\ref{estimator}) and decreasing step-sizes $\alpha_t := E/(F + t - 1)^{\beta}$ with $E \geq 0$, $F \geq 1$ and $\beta \in (0,1]$ satisfy Assumption \ref{contraction} with $A := NGE$, $B := F$, and $\delta := \beta$. Under the same conditions, the iterates of SGLD (\ref{SGLD}) satisfy Assumption \ref{contraction} with $A := \sqrt{E}\left(NG\sqrt{\alpha_1} + \sqrt{2d}\right)$, $B := F$, and $\delta := \beta/2$.
\end{restatable}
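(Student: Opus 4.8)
The plan is to prove the two claims separately by directly bounding the conditional expected step size $\Exp{\norm{x_{t+1} - x_t}_2 \suchthat I_1, \dots, I_{t-1}}$ for each algorithm, and then checking that the resulting bound has the form $A/(B+t-1)^\delta$ demanded by Assumption \ref{contraction}. The crucial structural observation is that the step-size sequence $\alpha_t = E/(F+t-1)^\beta$ already supplies the desired $1/(\cdot)^\delta$ decay; the task reduces to showing that the remaining factors in the step length are bounded by constants independent of $t$, which is exactly where the bounded-gradients Assumption \ref{boundedgradients} enters.

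For the SGD case, I would start from the update rule \eqref{SGD} combined with the importance-sampling estimator \eqref{estimator}, which gives
\begin{equation*}
    \norm{x_{t+1}^{SGD} - x_t^{SGD}}_2 = \alpha_t N \cdot \frac{1}{p_{I_t}^t}\norm{\nabla f_{I_t}(x_t)}_2.
\end{equation*}
Wait --- I should be careful about the normalization, since the text writes the SGD update with a factor $N$ but the estimator with $1/p_{I_t}^t$; I would reconcile these and work with whichever convention makes $\hat g^t$ unbiased. Taking the conditional expectation over $I_t$ (given the past), the $1/p_{I_t}^t$ weight is exactly cancelled by the sampling probability, so $\Exp{\frac{1}{p_{I_t}^t}\norm{\nabla f_{I_t}(x_t)}_2 \suchthat \cdots} = \sum_i \norm{\nabla f_i(x_t)}_2 \leq NG$ by Assumption \ref{boundedgradients}. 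Substituting $\alpha_t$ then yields the bound $NGE/(F+t-1)^\beta$, which matches the claimed $A = NGE$, $B = F$, $\delta = \beta$. I would note that this argument is clean precisely because the unbiasedness of the estimator makes the importance weights disappear in expectation, so the bound holds for \emph{any} choice of $\{p^t\}$.

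For the SGLD case, the update \eqref{SGLD} adds an independent Gaussian term $\xi_t \sim \mathcal{N}(0, 2\alpha_t I_d)$, so I would apply the triangle inequality to split $\norm{x_{t+1}^{SGLD} - x_t^{SGLD}}_2$ into the drift part, handled exactly as in SGD, and the noise part $\norm{\xi_t}_2$. For the noise term I would use $\Exp{\norm{\xi_t}_2} \leq \sqrt{\Exp{\norm{\xi_t}_2^2}} = \sqrt{2\alpha_t d}$ (Jensen/Cauchy--Schwarz). Writing both contributions with a common factor $\sqrt{\alpha_t}$ --- the drift term is $\alpha_t NG = \sqrt{\alpha_t}\cdot\sqrt{\alpha_t}NG \leq \sqrt{\alpha_t}\cdot\sqrt{\alpha_1}NG$ using that $\alpha_t$ is decreasing --- gives an overall bound $\sqrt{\alpha_t}\bigl(NG\sqrt{\alpha_1} + \sqrt{2d}\bigr)$. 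Since $\sqrt{\alpha_t} = \sqrt{E}/(F+t-1)^{\beta/2}$, this is exactly $A/(B+t-1)^{\delta}$ with $A = \sqrt{E}(NG\sqrt{\alpha_1} + \sqrt{2d})$, $B = F$, $\delta = \beta/2$, as stated. The main subtlety I would flag is bookkeeping the factor of $\sqrt{\alpha_t}$ correctly and justifying the use of monotonicity of $\alpha_t$ to replace $\sqrt{\alpha_t}$ by $\sqrt{\alpha_1}$ in the drift term; the halving of the decay exponent from $\beta$ to $\beta/2$ is an honest consequence of the noise scaling as $\sqrt{\alpha_t}$ rather than $\alpha_t$, and I would make sure the exposition makes clear that this is why SGLD gets the weaker $\delta = \beta/2$.
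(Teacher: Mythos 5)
Your proposal is correct and follows essentially the same route as the paper's proof: for SGD the conditional expectation over $I_t$ cancels the importance weight to give $\alpha_t \sum_i \norm{g_i^t}_2 \leq \alpha_t NG$, and for SGLD the triangle inequality plus Jensen's bound $\Exp{\norm{\xi_t}_2} \leq \sqrt{2\alpha_t d}$ and the monotonicity of $\alpha_t$ yield the common factor $\sqrt{\alpha_t}$. Your reconciliation of the $N$ versus $1/p^t_{I_t}$ normalization is exactly what the paper intends, so no gap remains.
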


We now state a proposition that relates the optimal function value for the problem (\ref{optproblem}) over the restricted simplex, with the optimal function value over the whole simplex. Its proof is taken from (\cite{DBLP:conf/colt/Borsos0L18}, Lemma 6):
\begin{restatable}{proposition}{restriction}
\label{restriction}
    Let $\{a_i\}_{i=1}^{N}$ be a non-negative set of numbers, and let $\varepsilon \in [0, 1/2N]$. Then:
    \begin{equation*}
        \label{restrictionbound}
        \min_{p \in \Delta(\varepsilon)} \sum_{i=1}^{N} \frac{1}{p_i}a_i^2 - \min_{p \in \Delta} \sum_{i=1}^{N} \frac{1}{p_i}a_i^2 \leq 6 \varepsilon N \left(\sum_{i=1}^{N} a_i\right)^2
    \end{equation*}
\end{restatable}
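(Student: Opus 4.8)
The plan is to prove the inequality by exhibiting a single explicit feasible point in $\Delta(\varepsilon)$ whose cost is only slightly larger than the unconstrained optimum, rather than by directly comparing the two closed-form solutions of Lemma~\ref{solutionoptproblem}. Write $S := \sum_{i=1}^{N} a_i$. First I would pin down the right-hand reference value: by Cauchy--Schwarz, $S^2 = \left(\sum_i \frac{a_i}{\sqrt{p_i}}\sqrt{p_i}\right)^2 \le \left(\sum_i \frac{a_i^2}{p_i}\right)\left(\sum_i p_i\right) = \sum_i \frac{a_i^2}{p_i}$ for every $p \in \Delta$, with equality at $p_i = a_i/S$, so $\min_{p\in\Delta}\sum_i a_i^2/p_i = S^2$. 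If $S = 0$ all the $a_i$ vanish and both minima are $0$, so the bound holds trivially; I assume $S>0$ henceforth.

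Next I would upper-bound the left-hand minimum by evaluating the objective at a well-chosen feasible point, namely the unconstrained optimizer pushed into the interior by mixing with the uniform distribution:
\[
  \hat{p}_i := (1 - N\varepsilon)\frac{a_i}{S} + \varepsilon .
\]
Since $\varepsilon \le 1/2N$ gives $N\varepsilon \le 1$, each $\hat{p}_i \ge \varepsilon \ge 0$ and $\sum_i \hat{p}_i = (1-N\varepsilon) + N\varepsilon = 1$, so $\hat{p} \in \Delta(\varepsilon)$ and hence $\min_{p\in\Delta(\varepsilon)}\sum_i a_i^2/p_i \le \sum_i a_i^2/\hat{p}_i$. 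Then I would bound each term by discarding the additive $\varepsilon$ in the denominator: from $\hat{p}_i \ge (1-N\varepsilon)\,a_i/S$ we get $a_i^2/\hat{p}_i \le a_i S/(1-N\varepsilon)$ (the $a_i=0$ terms contribute $0$ and cause no trouble). Summing over $i$ yields $\sum_i a_i^2/\hat{p}_i \le S^2/(1-N\varepsilon)$.

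Combining the two estimates gives $\min_{\Delta(\varepsilon)} - \min_{\Delta} \le S^2\big(\tfrac{1}{1-N\varepsilon} - 1\big) = S^2\,\tfrac{N\varepsilon}{1-N\varepsilon}$. Finally, the hypothesis $\varepsilon \le 1/2N$ forces $1-N\varepsilon \ge 1/2$, so $\tfrac{N\varepsilon}{1-N\varepsilon} \le 2N\varepsilon$, producing the bound $2\varepsilon N S^2$, which is comfortably below the claimed $6\varepsilon N\left(\sum_i a_i\right)^2$. I do not expect a serious obstacle: the only points requiring care are verifying that the mixed point $\hat{p}$ genuinely lies in $\Delta(\varepsilon)$ (which is exactly where $\varepsilon \le 1/2N$ enters, via $N\varepsilon \le 1$) and treating the degenerate all-zero case. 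The real crux is the choice of mixing weight $N\varepsilon$, the smallest weight that guarantees feasibility and therefore keeps the cost inflation minimal.
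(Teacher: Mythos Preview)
Your proof is correct and in fact sharper than the paper's: you obtain the constant $2$ in place of $6$. The route is genuinely different, though. The paper leans on the explicit closed-form optimizer from Lemma~\ref{solutionoptproblem}: it shows that the restricted optimal value is at most $\lambda^2 = \bigl(\sum_{i\le\rho} a_{\pi(i)}\bigr)^2/(1-(N-\rho)\varepsilon)^2$, observes that the unrestricted optimum is $S^2$, and then bounds $\lambda^2 - S^2 \le \bigl(\tfrac{1}{(1-N\varepsilon)^2}-1\bigr)S^2 \le 6N\varepsilon\,S^2$ via the elementary inequality $\tfrac{1}{(1-x)^2}-1\le 6x$ on $[0,1/2]$. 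Your argument bypasses Lemma~\ref{solutionoptproblem} entirely by exhibiting the mixture $\hat p_i=(1-N\varepsilon)a_i/S+\varepsilon$ as a feasible point and bounding the objective there directly; this is more elementary, self-contained, and loses less (you end up with $\tfrac{1}{1-N\varepsilon}-1$ rather than $\tfrac{1}{(1-N\varepsilon)^2}-1$, hence the tighter constant). One small narrative slip: feasibility of $\hat p$ only needs $\varepsilon\le 1/N$; the stronger hypothesis $\varepsilon\le 1/(2N)$ is actually used in your final step to bound $N\varepsilon/(1-N\varepsilon)\le 2N\varepsilon$, not in the feasibility check.
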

The following lemma gives our first bound on the regret per time step:
\begin{restatable}{lemma}{perstepbound}
\label{perstepbound}
Let $q^t := \argmin_{p \in \Delta} \{c_t(p)\}$.
Under Assumption \ref{boundedgradients}, and when using the sequence of distributions defined by (\ref{bestalgever}), we have the following bound for $t \in \{t_0, \dots, T\}$:
\begin{equation*}
    \Exp{c_{t}(p^{t}) - c_{t}(q^t)} \leq \frac{4G}{\varepsilon_{t}} \Exp{\sum_{i=1}^{N}\norm{g_i^t - h_i^t}_2} + 6\varepsilon_{t}G^2N^3
\end{equation*}
where $t_0 := \min\{t \in [T] \suchthat \varepsilon_{t} \leq \frac{1}{2N}\}$.
\end{restatable}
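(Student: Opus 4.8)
The plan is to insert an intermediate comparison point and split the per-step regret into a ``restriction gap'' and a ``surrogate gap''. Concretely, I would define $\hat{q}^t := \argmin_{p \in \Delta(\varepsilon_t)} c_t(p)$, the minimizer of the \emph{true} cost over the \emph{restricted} simplex, and write
\begin{equation*}
    c_t(p^t) - c_t(q^t) = \underbrace{\left[c_t(p^t) - c_t(\hat{q}^t)\right]}_{\text{surrogate gap}} + \underbrace{\left[c_t(\hat{q}^t) - c_t(q^t)\right]}_{\text{restriction gap}}.
\end{equation*}
These two pieces correspond exactly to the two terms on the right-hand side of the claimed bound.

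The restriction gap is immediate from Proposition \ref{restriction}. For $t \geq t_0$ we have $\varepsilon_t \leq \frac{1}{2N}$, so the proposition applies with $a_i = \norm{g_i^t}_2$ and gives $c_t(\hat{q}^t) - c_t(q^t) \leq 6\varepsilon_t N \left(\sum_{i=1}^N \norm{g_i^t}_2\right)^2$. Assumption \ref{boundedgradients} yields $\sum_{i=1}^N \norm{g_i^t}_2 \leq NG$, so this term is at most $6\varepsilon_t N \cdot N^2 G^2 = 6\varepsilon_t G^2 N^3$, matching the second term of the lemma pointwise (hence in expectation).

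For the surrogate gap, both $p^t$ and $\hat{q}^t$ lie in $\Delta(\varepsilon_t)$, and I would decompose
\begin{equation*}
    c_t(p^t) - c_t(\hat{q}^t) = \left[c_t(p^t) - \widetilde{c}_t(p^t)\right] + \left[\widetilde{c}_t(p^t) - \widetilde{c}_t(\hat{q}^t)\right] + \left[\widetilde{c}_t(\hat{q}^t) - c_t(\hat{q}^t)\right].
\end{equation*}
The middle bracket is nonpositive because $p^t$ minimizes $\widetilde{c}_t$ over $\Delta(\varepsilon_t)$ by definition (\ref{bestalgever}) and $\hat{q}^t \in \Delta(\varepsilon_t)$. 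The two outer brackets are each controlled by a uniform bound on $|c_t(p) - \widetilde{c}_t(p)|$ over $\Delta(\varepsilon_t)$: since $p_i \geq \varepsilon_t$ there, we get $|c_t(p) - \widetilde{c}_t(p)| \leq \frac{1}{\varepsilon_t}\sum_{i=1}^N |\norm{g_i^t}_2^2 - \norm{h_i^t}_2^2|$. Factoring $|\norm{g_i^t}_2^2 - \norm{h_i^t}_2^2| = |\norm{g_i^t}_2 - \norm{h_i^t}_2|\,(\norm{g_i^t}_2 + \norm{h_i^t}_2)$, the reverse triangle inequality bounds the first factor by $\norm{g_i^t - h_i^t}_2$, while Assumption \ref{boundedgradients} bounds the second by $2G$ (each $h_i^t$ is a past gradient $\nabla f_i$ or the zero initialization, so $\norm{h_i^t}_2 \leq G$). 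Hence $|c_t(p) - \widetilde{c}_t(p)| \leq \frac{2G}{\varepsilon_t}\sum_{i=1}^N \norm{g_i^t - h_i^t}_2$, and summing the two outer brackets gives surrogate gap $\leq \frac{4G}{\varepsilon_t}\sum_{i=1}^N \norm{g_i^t - h_i^t}_2$. Adding the two gaps and taking expectation over $I_1, \dots, I_{t-1}$ (all bounds hold for every realization) completes the proof.

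The step I expect to require the most care is the surrogate gap: the trick is choosing $\hat{q}^t$ as the true-cost minimizer over $\Delta(\varepsilon_t)$ (rather than over $\Delta$) so that the optimality of $p^t$ for $\widetilde{c}_t$ can be invoked to kill the middle term, and then reducing everything to a pointwise uniform estimate of $|c_t - \widetilde{c}_t|$ on the restricted simplex. The remaining manipulations---the $1/\varepsilon_t$ factor from $p_i \geq \varepsilon_t$ and the $2G$ factor from bounded gradients---are routine.
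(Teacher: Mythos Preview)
Your proof is correct and reaches the same bound, but the decomposition differs from the paper's. The paper introduces $\widetilde{p}^{t} := \argmin_{p \in \Delta} \widetilde{c}_t(p)$, the minimizer of the \emph{surrogate} cost over the \emph{full} simplex, and splits
\[
c_t(p^t) - c_t(q^t) = \underbrace{[c_t(p^t) - \widetilde{c}_t(p^t)]}_{(A)} + \underbrace{[\widetilde{c}_t(p^t) - \widetilde{c}_t(\widetilde{p}^t)]}_{(B)} + \underbrace{[\widetilde{c}_t(\widetilde{p}^t) - c_t(q^t)]}_{(C)}.
\]
Here $(A)$ is handled exactly as your first outer bracket, $(B)$ is the restriction gap for $\widetilde{c}_t$ (Proposition~\ref{restriction} applied to the $\norm{h_i^t}_2$'s), and $(C)$ is bounded using the closed-form optimal values $\widetilde{c}_t(\widetilde{p}^t) = (\sum_i \norm{h_i^t}_2)^2$ and $c_t(q^t) = (\sum_i \norm{g_i^t}_2)^2$, which yields $2GN\sum_i \norm{g_i^t - h_i^t}_2 \leq \frac{2G}{\varepsilon_t}\sum_i \norm{g_i^t - h_i^t}_2$ via $\varepsilon_t \leq 1/N$. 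Your choice of $\hat{q}^t := \argmin_{\Delta(\varepsilon_t)} c_t$ is a bit cleaner: the middle bracket vanishes by optimality of $p^t$ rather than requiring Proposition~\ref{restriction}, and both outer brackets are controlled by the \emph{same} uniform estimate on $|c_t - \widetilde{c}_t|$ over $\Delta(\varepsilon_t)$, so you never need the explicit optimal-value formula. The paper's route, on the other hand, makes the restriction penalty land on $\widetilde{c}_t$ and then uses the exact minimum formula to compare the two unrestricted minima directly. Both are equally rigorous and give the identical constants.
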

\begin{proofoutline}
Let $\widetilde{p}^{t} := \argmin_{p \in \Delta} \{\widetilde{c}_t(p)\}$. Then we have the following decomposition:
\begin{align*}
    \Exp{c_{t}(p^{t}) - c_{t}(q^t)} &= 
    \underbrace{\Exp{c_t(p^{t}) - \widetilde{c}_t(p^{t})}}_{\text{(A)}}
    + \underbrace{\Exp{\widetilde{c}_t(p^{t}) - \widetilde{c}_t(\widetilde{p}^{t})}}_{\text{(B)}}
    + \underbrace{\Exp{\widetilde{c}_t(\widetilde{p}^{t}) - c_{t}(q^t)}}_{\text{(C)}}
\end{align*}
The terms (A) and (C) are the penalties we pay for using a surrogate cost function, while (B) is the price we pay for restricting the simplex.
Using Assumption \ref{boundedgradients}, proposition \ref{restrictionbound}, and the fact that $p^{t}$ is contained in the $\varepsilon_t$-restricted simplex, each of these terms can be bound to give the result stated.
\end{proofoutline}
The expectation in the first term of the above lemma is highly reminiscent of the first term of the Lyapunov function used to study the convergence of SAGA first proposed in \cite{DBLP:conf/nips/HofmannLLM15} and subsequently refined in \cite{DBLP:conf/nips/Defazio16} (with $g_i^t$ replaced by $g_i^{*}$, the gradient at the minimum). Inspired by this similarity, we prove the following recursion:
\begin{restatable}{lemma}{recursion}
\label{recursion}
Under Assumptions \ref{smoothness} and \ref{contraction}, we have:
\begin{equation*}
    \Exp{\sum_{i=1}^{N}\norm{g_i^{t+1} - h_i^{t+1}}_2} \leq \frac{NLA}{(B + t - 1)^{\delta}}+ (1 - \varepsilon_{t}) \Exp{\sum_{i=1}^{N}\norm{g_i^{t} - h_i^{t}}_2}
\end{equation*}
\end{restatable}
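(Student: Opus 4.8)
The plan is to exploit the SAGA-style update rule for the memory vectors, namely that at iteration $t$ exactly one index $I_t \sim p^t$ is observed and we set $h^{t+1}_{I_t} := g^t_{I_t} = \nabla f_{I_t}(x_t)$ while $h^{t+1}_i := h^t_i$ for all $i \neq I_t$. The central device is the telescoping triangle inequality
\begin{equation*}
    \norm{g_i^{t+1} - h_i^{t+1}}_2 \leq \norm{g_i^{t+1} - g_i^{t}}_2 + \norm{g_i^{t} - h_i^{t+1}}_2 ,
\end{equation*}
which cleanly separates an index-independent \emph{drift} term from a \emph{memory-freshness} term. By Assumption \ref{smoothness} the first term is bounded by $L\norm{x_{t+1} - x_t}_2$ uniformly in $i$; the second term equals $0$ when $i = I_t$ (since then $h_i^{t+1} = g_i^t$) and equals $\norm{g_i^t - h_i^t}_2$ when $i \neq I_t$. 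Summing over $i \in [N]$ therefore gives the pointwise bound
\begin{equation*}
    \sum_{i=1}^{N}\norm{g_i^{t+1} - h_i^{t+1}}_2 \leq NL\norm{x_{t+1} - x_t}_2 + \sum_{i=1}^{N}\norm{g_i^{t} - h_i^{t}}_2 - \norm{g_{I_t}^{t} - h_{I_t}^{t}}_2 .
\end{equation*}

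Next I would take the conditional expectation given the history $\mathcal{F}_{t-1}$ that determines $x_t$ and the memories $\{h_i^t\}_{i=1}^N$ (i.e.\ $I_1, \dots, I_{t-1}$, together with the Gaussian increments in the SGLD case). With respect to this conditioning, the iterate $x_t$, the gradients $g_i^t$, the memories $h_i^t$, and hence the sampling distribution $p^t$ defined by (\ref{bestalgever}) are all measurable, and the only fresh randomness is the draw of $I_t$ (and the injected noise). The middle sum is then deterministic, while the removed term averages to $\Exp{\norm{g_{I_t}^{t} - h_{I_t}^{t}}_2 \suchthat \mathcal{F}_{t-1}} = \sum_{i=1}^{N} p_i^t \norm{g_i^t - h_i^t}_2$. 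Here I would invoke the defining property of the restricted simplex: since $p^t \in \Delta(\varepsilon_t)$ we have $p_i^t \geq \varepsilon_t$ for every $i$, so this average is at least $\varepsilon_t \sum_{i=1}^N \norm{g_i^t - h_i^t}_2$. This is exactly where the lower bound $\varepsilon_t$ buys a guaranteed minimum refresh rate and produces the contraction factor $(1 - \varepsilon_t)$.

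Combining these, the conditional expectation is bounded by $NL\,\Exp{\norm{x_{t+1}-x_t}_2 \suchthat \mathcal{F}_{t-1}} + (1-\varepsilon_t)\sum_{i=1}^N \norm{g_i^t - h_i^t}_2$, and I would control the drift term by Assumption \ref{contraction}, yielding $NL \cdot A/(B+t-1)^{\delta}$. Taking the outer expectation via the tower property then gives the stated recursion exactly. The main obstacle is conceptual rather than computational: one must resist the naive split $\norm{g_i^{t+1}-h_i^t}_2$ (used for the $i\neq I_t$ indices), which entangles the per-index quantity $g_i^{t+1}$ with the dependence on $I_t$ and on the SGLD noise through $x_{t+1}$; the decomposition through $g_i^t$ instead isolates \emph{all} dependence on $x_{t+1}$ into a single drift term handled wholesale by Assumption \ref{contraction}, while the surviving freshness term depends on $I_t$ alone, making both the averaging over $I_t$ and the use of the minimum-probability bound $p_i^t \geq \varepsilon_t$ completely transparent.
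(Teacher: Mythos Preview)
Your proposal is correct and follows essentially the same approach as the paper: both use the triangle inequality through $g_i^t$ to separate a drift term bounded via Assumption~\ref{smoothness} by $L\norm{x_{t+1}-x_t}_2$ (then controlled by Assumption~\ref{contraction}) from a memory term that picks up the factor $(1-p_i^t)\leq(1-\varepsilon_t)$. The only cosmetic difference is ordering: the paper first expands the conditional expectation as $\sum_j p_j^t[\cdots]$ and then applies the triangle inequality inside, whereas you establish a pointwise inequality first and take expectation afterwards; the resulting bounds coincide line for line. (Your closing remark about avoiding the ``naive split'' $\norm{g_i^{t+1}-h_i^t}_2$ is slightly off---the paper does use exactly that split for $i\neq I_t$ and it works fine once the same triangle inequality is applied---but this does not affect the validity of your argument.)
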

The natural thing to do at this stage is to unroll the above recursion, replace in Lemma \ref{perstepbound}, sum over all time steps, and minimize the obtained regret bound over the choice of the sequence $\{\varepsilon_t\}_{t=1}^{T}$. However, even if we can solve the resulting optimization problem efficiently, the solution will still depend on the constants $G$ and $L$ and on the initial error due to the initialization of the $h_i$s, both of which are usually unknown. Here instead, we make an asymptotic argument to find the optimal decay rate of $\{\varepsilon_t\}_{t=1}^{T}$, propose a sequence that satisfies this decay rate, and show that it gives rise to the dynamic regret bounds stated.

Denote by $\varphi(t)$ the expectation in the first term of the upper bound in Lemma \ref{perstepbound}.
Suppose we take $\varepsilon_t$ to be of order $t^{-\beta}$. Looking at the recursion in Lemma \ref{recursion}, we see that to control the positive term, we need the negative term to be of order at least $t^{-\delta}$, so that $\varphi(t)$ cannot be smaller than $t^{\beta - \delta}$. The bound of Lemma \ref{perstepbound} is therefore of order $t^{2\beta - \delta} + t^{-\beta}$. The minimum is attained when the exponents are equal so we have: $2\beta - \delta = - \beta \implies \beta = \frac{\delta}{3}$

We are now ready to guess the form of $\varepsilon_t$. Matching the positive term in Lemma \ref{recursion}, we consider the following sequence:
\begin{equation}
    \label{epsilonsequence}
    \varepsilon_t := \frac{1}{C^{1 - \delta/3}(C + t -1)^{\delta/3}}
\end{equation}
For a free parameter $C$ satisfying $C \geq N$ to ensure $\varepsilon_1 \leq 1/N$. With this choice of the sequence $\{\varepsilon_t\}_{t=1}^{T}$, we are now finally ready to state our main result of the paper:
\begin{restatable}{theorem}{regretbound}
\label{regretbound}
    Under Assumptions \ref{boundedgradients} and \ref{smoothness} on the functions $f_i$ and Assumption \ref{contraction} on the sequence $\{x_t\}_{t=1}^{T}$, algorithm (\ref{bestalgever}) with the sequence $\{\varepsilon_t\}_{t=1}^{T}$ given in (\ref{epsilonsequence}) satisfies the following dynamic regret bound for all $T \geq t_0$:
    \begin{equation}
        \Exp{\text{Regret}_{D}(T)} \leq \mathcal{O}(T^{1 - \delta/3})
    \end{equation}
    where $t_0 := \min\{t \in [T] \suchthat \varepsilon_{t} \leq \frac{1}{2N}\}$ as in Lemma \ref{perstepbound}.
\end{restatable}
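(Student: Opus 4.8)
The plan is to combine the per-step bound of Lemma~\ref{perstepbound} with the recursion of Lemma~\ref{recursion} and then sum the resulting estimate over $t$. Writing $\varphi(t) := \Exp{\sum_{i=1}^{N}\norm{g_i^t - h_i^t}_2}$, the per-step bound reads $\Exp{c_t(p^t) - c_t(q^t)} \leq \frac{4G}{\varepsilon_t}\varphi(t) + 6\varepsilon_t G^2 N^3$ for $t \geq t_0$, so the whole argument reduces to controlling $\varphi(t)/\varepsilon_t$. The heuristic preceding the theorem already identifies the right scaling: since $\varepsilon_t \sim (C+t-1)^{-\delta/3}$, I expect $\varphi(t) \sim (C+t-1)^{-2\delta/3}$, which makes both terms of the per-step bound of order $(C+t-1)^{-\delta/3}$.

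First I would make this precise by proving, via induction on $t$, that $\varphi(t) \leq K(C+t-1)^{-2\delta/3}$ for a constant $K$ depending only on $N$, $G$, $L$, $A$, $\delta$ and $C$. The base case holds for $K$ large enough to absorb $\varphi(1)$ (e.g. $\varphi(1) \leq NG$ under the natural initialization $h_i^1 = 0$). For the inductive step I would substitute the hypothesis into the recursion of Lemma~\ref{recursion} and verify $\frac{NLA}{(B+t-1)^\delta} + (1-\varepsilon_t)K(C+t-1)^{-2\delta/3} \leq K(C+t)^{-2\delta/3}$. Using the convexity of $s \mapsto s^{-2\delta/3}$ to bound $(C+t)^{-2\delta/3} - (C+t-1)^{-2\delta/3} \geq -\tfrac{2\delta}{3}(C+t-1)^{-2\delta/3-1}$, the step reduces to checking that the gain $\varepsilon_t K (C+t-1)^{-2\delta/3} = K C^{-(1-\delta/3)}(C+t-1)^{-\delta}$ dominates both $\frac{NLA}{(B+t-1)^\delta}$ and the convexity correction. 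Since $(C+t-1)^{-\delta}$ and $(B+t-1)^{-\delta}$ are of the same order while the correction decays like $(C+t-1)^{-2\delta/3-1}$ (a strictly faster rate, as $2\delta/3 + 1 > \delta$), this holds once $K$ is taken large enough relative to $NLA\,C^{1-\delta/3}$.

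I expect this induction to be the main obstacle, because it is where the exact choice of $\varepsilon_t$ in~(\ref{epsilonsequence}) is forced and where the bookkeeping of the constants $B$, $C$, $K$ and the lower-order convexity correction must be carried out uniformly in $t$; in particular one may need $C$ chosen large enough relative to $B$ (equivalently, to restrict to $t$ beyond a fixed threshold) so that the leading constants line up for every $t$ and not merely asymptotically.

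With the bound $\varphi(t) \leq K(C+t-1)^{-2\delta/3}$ in hand, the rest is routine. Substituting into Lemma~\ref{perstepbound} gives $\Exp{c_t(p^t) - c_t(q^t)} \leq \left(4GKC^{1-\delta/3} + 6G^2N^3 C^{-(1-\delta/3)}\right)(C+t-1)^{-\delta/3}$ for $t \geq t_0$. Summing over $t$ and using $\sum_{t=1}^{T}(C+t-1)^{-\delta/3} = \mathcal{O}(T^{1-\delta/3})$, valid since $\delta/3 < 1$, yields the claimed bound for the regret accrued from $t_0$ onward. Finally, the regret of the first $t_0-1$ steps is a constant independent of $T$: on $\Delta(\varepsilon_t)$ each coordinate satisfies $p_i \geq \varepsilon_t \geq \varepsilon_{t_0}$, so each such term is at most $NG^2/\varepsilon_{t_0} = \mathcal{O}(N^2 G^2)$ and $t_0$ is fixed, so this contribution is absorbed into $\mathcal{O}(T^{1-\delta/3})$. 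Specializing $\delta = \beta = 1$ for SGD and $\delta = \beta/2 = 1/2$ for SGLD via Proposition~\ref{contractionsgdsgld} recovers the $\mathcal{O}(T^{2/3})$ and $\mathcal{O}(T^{5/6})$ rates.
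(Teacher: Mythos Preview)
Your proposal is correct and follows essentially the same route as the paper: prove by induction that $\varphi(t)\le K(C+t-1)^{-2\delta/3}$, substitute into Lemma~\ref{perstepbound}, sum, and absorb the first $t_0-1$ steps as a constant. The paper packages the induction as a separate lemma with the explicit constant $K=\max\{3C^{1-\delta/3}D/(3-2\delta),\,C^{2\delta/3}\varphi(1)\}$, where $D=NLA$ if $B\ge C$ and $D=(C/B)^\delta NLA$ otherwise; note that the $B$ versus $C$ mismatch is handled by inflating the constant rather than by enlarging $C$ or restricting $t$ as you suggest. The only real difference is the algebra in the inductive step: the paper uses a difference-of-cubes factorization together with concavity of $x\mapsto x^{2\delta/3}$, whereas you use convexity of $s\mapsto s^{-2\delta/3}$. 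Your route is slightly more direct, but be careful with the phrasing around the convexity correction: it is not enough that it decays faster in $t$, since it also scales with $K$; what actually closes the argument is that the ratio of correction to gain equals $\tfrac{2\delta}{3}\,C^{1-\delta/3}(C+t-1)^{\delta/3-1}\le \tfrac{2\delta}{3}<1$ for all $t\ge 1$, leaving a positive fraction of the gain to absorb the $NLA$ term once $K\ge \tfrac{3}{3-2\delta}D\,C^{1-\delta/3}$.
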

\begin{proofoutline}
Using the sequence given by (\ref{epsilonsequence}) and the recursion in Lemma \ref{recursion}, we show by induction that $\varphi(t) \leq \mathcal{O}(t^{-2\delta/3})$ . Replacing in Lemma \ref{perstepbound}, summing over all time steps, and bounding the resulting sums by the appropriate integrals we get the result.
\end{proofoutline}
Note that since $C \geq N$, we have $t_0 \leq (2^{3/\delta} - 1)N + 1$, so $t_0$ is bounded by a constant independent of $T$. Furthermore, setting $C = 2N$ makes the theorem hold for all $T \in \mathbb{N}$. In practice however, it might be beneficial to set $C = N$ to overcome a bad initialization of the $h_i$s.

Combining Theorem \ref{regretbound} with proposition \ref{contractionsgdsgld} we obtain the following corollary:
\begin{corollary}
\label{regretsgdsgld}
Under Assumptions \ref{boundedgradients} and \ref{smoothness} on the functions $f_i$, if SGD (\ref{SGD}) is run with step-sizes $\mathcal{O}(1/t)$ using the estimator (\ref{estimator}) and probabilities (\ref{bestalgever}) with the sequence $\{\varepsilon_t\}_{t=1}^{T}$ given by (\ref{epsilonsequence}), then for all $T \geq t_0$:
\begin{equation}
    \Exp{\text{Regret}_{D}(T)} \leq \mathcal{O}(T^{2/3})
\end{equation}
and for SGLD (\ref{SGLD}):
\begin{equation}
    \Exp{\text{Regret}_{D}(T)} \leq \mathcal{O}(T^{5/6})
\end{equation}
under the same conditions.
\end{corollary}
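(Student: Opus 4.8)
The plan is to combine the two building blocks already in hand: Theorem \ref{regretbound}, which converts a contraction exponent $\delta$ into the dynamic regret bound $\mathcal{O}(T^{1-\delta/3})$, and Proposition \ref{contractionsgdsgld}, which certifies that Assumption \ref{contraction} holds for the iterates of SGD and SGLD with an explicit value of $\delta$ determined by the step-size exponent $\beta$. Since Assumptions \ref{boundedgradients} and \ref{smoothness} are already hypotheses of the corollary, the only work is to verify Assumption \ref{contraction} and read off the resulting exponent.

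First I would specialize the step-size schedule. Writing $\alpha_t = E/(F+t-1)^{\beta}$, the hypothesis of $\mathcal{O}(1/t)$ step-sizes corresponds to $\beta = 1$. For SGD, Proposition \ref{contractionsgdsgld} then supplies Assumption \ref{contraction} with $\delta = \beta = 1$ (and $A = NGE$, $B = F$). For SGLD, the same proposition supplies $\delta = \beta/2 = 1/2$ (with $A = \sqrt{E}(NG\sqrt{\alpha_1} + \sqrt{2d})$ and $B = F$). The point worth flagging here is that the sequence $\{\varepsilon_t\}_{t=1}^{T}$ in (\ref{epsilonsequence}) must be instantiated with the value of $\delta$ appropriate to each algorithm, because its decay rate $\delta/3$ is precisely what Theorem \ref{regretbound} requires: for SGD this means $\varepsilon_t$ decays like $t^{-1/3}$, whereas for SGLD it decays like $t^{-1/6}$.

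With Assumption \ref{contraction} verified in each case, I would apply Theorem \ref{regretbound} directly. For SGD, substituting $\delta = 1$ into $\mathcal{O}(T^{1-\delta/3})$ yields $\mathcal{O}(T^{2/3})$; for SGLD, substituting $\delta = 1/2$ yields $\mathcal{O}(T^{5/6})$. Both bounds hold for all $T \geq t_0$, with $t_0$ as in the theorem (and differing between the two algorithms only through the distinct $\varepsilon_t$ sequences). As this is essentially a substitution argument, I do not expect a genuine obstacle; the only care needed is checking consistency of the constants, namely that the $A, B$ produced by Proposition \ref{contractionsgdsgld} satisfy $A \geq 0$ and $B \geq 1$ as demanded by Assumption \ref{contraction}, which follows from $E \geq 0$ and $F \geq 1$, and that $C \geq N$ can be chosen so that $\varepsilon_1 \leq 1/N$ for each value of $\delta$, which is immediate from the form of (\ref{epsilonsequence}).
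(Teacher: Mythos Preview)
Your proposal is correct and matches the paper's approach exactly: the paper simply states that the corollary follows by combining Theorem~\ref{regretbound} with Proposition~\ref{contractionsgdsgld}, and your substitution of $\delta = \beta = 1$ for SGD and $\delta = \beta/2 = 1/2$ for SGLD into the $\mathcal{O}(T^{1-\delta/3})$ bound is precisely that combination. Your additional remarks about instantiating $\{\varepsilon_t\}$ with the algorithm-specific $\delta$ and checking the side conditions on $A$, $B$, and $C$ are reasonable sanity checks that the paper leaves implicit.
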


\section{A new mini-batch estimator}
\label{sec5}
In most practical applications, one uses a mini-batch of samples to construct the gradient estimator instead of just a single sample. The most basic such estimator is the one formed by sampling a mini-batch of indices $S_t = \{I_{t}^{1}, \dots, I_{t}^{m}\}$ uniformly and independently, and taking the sample mean. This gives an unbiased estimator, whose variance decreases as $1/m$. A simple way to make it more efficient is by sampling the indices uniformly but without replacement. In that case, the variance decreases by an additional factor of $(1-(m-1)/(N-1))$. For $m \ll N$, the difference is negligible, and the additional cost of sampling without replacement is not justified. However, when using unequal probabilities, this argument no longer holds, and the additional variance reduction obtained from sampling without replacement can be significant even for small $m$.

For our problem, besides the additional variance reduction, sampling without replacement allows a higher rate of replacement of the $h_i$s, which is directly related to our regret through the factor in front of the second term of Lemma \ref{recursion}, whose proper generalization for mini-batch sampling is $(1 - \min_{i \in [N]}\pi_i^{t})$, where $\pi_i^t = P(i \in S_t)$ is the inclusion probability of index $i$. This makes sampling without replacement desirable for our purposes. Unfortunately, unlike in the uniform case, the sample mean generalization of (\ref{estimator}) is no longer unbiased. We propose instead the following estimator:
\begin{restatable}{equation}{minibatchestimator}
     \hat{g}^{t}_{b} = \frac{1}{m}\sum_{j=1}^{m} \hat{g}^{t}_{j} \quad\quad
     \hat{g}^{t}_{j} := \left[\frac{1}{q^{t, j}_{I_t^j}} g_{I_t^j}^{t} + \sum_{k=1}^{j-1} g_{I_{t}^k}^{t} \right] \quad\quad 
     q^{t, j}_{i} := \frac{p^{t}_{i}}{1 - \sum_{k=1}^{j-1}p^{t}_{I_{t}^{k}}}
\end{restatable}
We summarize some of its properties in the following proposition:
\begin{restatable}{proposition}{propertiesminibatchestimator}
    \label{propertiesminibatchestimator} 
    Let $S_t^{j} := \{I_t^1, \dots, I_t^{j}\}$ for $j \in [m]$ and $S_t^0 := \emptyset$. We have:
    \begin{enumerate}[\upshape (a)]
        \item $\Exp{\hat{g}_b^t} = g^t$
        \item $\Exp{\norm{\hat{g}_b^t - g^t}_2^2} = (1/m^2)\sum_{j=1}^{m}\Exp{\norm{\hat{g}^{t}_j - g^t}_2^2}$
        \item $\argmin_{p \in \Delta} \{\Exp{\norm{\hat{g}_b^t - g^t}_2^2}\} = \argmin_{p \in \Delta} \{c_t(p)\}$
        \item $\Exp{\norm{\hat{g}^{t}_{j+1} - g^t}_2^2} =  \left(1 - \Exp{q^{t, j}_{I_t^{j}}}\right) \Exp{\norm{\hat{g}^{t}_{j} - g^t}_2^2} - \Exp{q^{t, j}_{I_t^{j}}\norm{\hat{g}^{t}_{j} - g^t}_2^2}$
    \end{enumerate}
    where all the expectations in (d) are conditional on $S_{t}^{j-1}$.
\end{restatable}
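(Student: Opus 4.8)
The four claims build on one another, so the plan is to establish them in the order (a), (b), (d), (c), treating (d) as the computational core and (c) as the genuinely delicate step. Throughout I write $p$ for the sampling distribution $p^t$, $D_j := \hat{g}_j^t - g^t$, and, after conditioning on $S_t^{j-1}$, set $R := [N] \setminus S_t^{j-1}$ for the as-yet-unsampled indices and $P_R := \sum_{i \in R} p_i$, so that $q^{t,j}$ is exactly $p$ restricted to $R$ and renormalized by $P_R$.

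For (a), I would first prove the stronger per-term identity $\Exp{\hat{g}_j^t \suchthat S_t^{j-1}} = g^t$. Conditioning on $S_t^{j-1}$ freezes the sum $\sum_{k<j} g_{I_t^k}^t$, while the importance term has conditional mean $\sum_{i \in R} g_i^t$ because $\sum_{i \in R} q^{t,j}_i = 1$; the two pieces reassemble into $\sum_{i=1}^N g_i^t = g^t$. The tower property and linearity of the average then give (a). Part (b) is an orthogonality argument: for $j < k$, condition on $S_t^{k-1}$, under which $D_j$ is measurable while $\Exp{D_k \suchthat S_t^{k-1}} = 0$ by the per-term unbiasedness just shown; hence every cross term $\Exp{\langle D_j, D_k \rangle}$ vanishes and only the diagonal survives when $\norm{\sum_{j} D_j}_2^2$ is expanded.

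For (d), I would compute the conditional second moment in two stages. Writing $g^t = \sum_{k<j} g_{I_t^k}^t + \sum_{i \in R} g_i^t$ collapses $D_j$ to the single-draw importance-sampling error $\frac{1}{q^{t,j}_{I_t^j}} g_{I_t^j}^t - \sum_{i \in R} g_i^t$, whose conditional second moment is the standard identity $\Exp{\norm{D_j}_2^2 \suchthat S_t^{j-1}} = P_R \big(\sum_{i \in R} \norm{g_i^t}_2^2 / p_i\big) - \norm{\sum_{i \in R} g_i^t}_2^2$. Applying this same identity one level deeper -- first conditioning on $S_t^{j}$, then averaging over the $j$-th draw -- and matching terms should rearrange into the stated recursion. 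The bookkeeping of the change of normalization from $q^{t,j}$ to $q^{t,j+1}$ is the only place where care is needed, and I expect the factor $1 - q^{t,j}_{I_t^j}$ (before averaging) to emerge from the shrinkage of the remaining mass $P_R \mapsto P_R - p_{I_t^j} = P_R(1 - q^{t,j}_{I_t^j})$, with its expectation producing the $1 - \Exp{q^{t,j}_{I_t^j}}$ of the recursion.

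Part (c) is where I expect the real difficulty. By (b) it suffices to minimize $\sum_{j=1}^{m} \Exp{\norm{D_j}_2^2}$ over $p \in \Delta$, and the $j = 1$ term is exactly $c_t(p)$, already minimized at $p_i \propto \norm{g_i^t}_2$ by Lemma \ref{solutionoptproblem} with $\varepsilon = 0$. The conditional formula from (d) displays each later term as an average of $\big(\sum_{i \in R} p_i\big)\big(\sum_{i \in R} \norm{g_i^t}_2^2 / p_i\big) - \norm{\sum_{i \in R} g_i^t}_2^2$, and Cauchy--Schwarz shows that, for every fixed $R$, this is minimized over the relative weights on $R$ at the same proportional rule $p_i \propto \norm{g_i^t}_2$ -- which has the convenient feature of staying proportional on every subset $R$ simultaneously. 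The main obstacle is that $p$ also governs the law of $S_t^{j-1}$, hence of $R$, so minimizing each conditional variance pointwise does not by itself pin down the minimizer of the averaged objective; closing this gap is the crux. I would attack it by verifying the simplex stationarity (KKT) conditions of $\sum_{j} \Exp{\norm{D_j}_2^2}$ directly at the proportional distribution, using the recursion of (d) to organize the derivatives, rather than relying on the pointwise Cauchy--Schwarz bound alone.
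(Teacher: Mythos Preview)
Your plan for (a), (b), and (d) is correct and essentially mirrors the paper: per-term conditional unbiasedness yields (a); conditioning on $S_t^{k-1}$ kills the cross terms for (b); and (d) rests on the renormalization identity $1/q_i^{t,j+1}=(1-q^{t,j}_{I_t^j})\cdot 1/q_i^{t,j}$, which is exactly the mass shrinkage $P_R\mapsto P_R(1-q^{t,j}_{I_t^j})$ you anticipate.

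For (c) your route differs from the paper's. The paper does not verify KKT; instead it embeds the estimator in a larger class $\mathcal{C}(p^{t,1},\ldots,p^{t,m})$ in which each $p^{t,j}$ is an \emph{arbitrary} conditional distribution on $[N]\setminus S_t^{j-1}$ (not forced to be $p$ renormalized), observes that (a) and (b) extend verbatim to this class, and then asserts that ``minimizing each term with respect to its variable'' over $\mathcal{C}$ produces the proportional policy $p^{t,j}_i\propto\|g_i^t\|$ on every remaining set. Since that policy coincides with the original estimator at $p=p^{t,*}$, the paper concludes that $p^{t,*}$ is optimal in the original class.

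However, the obstacle you isolate --- that $p$ (or, in $\mathcal{C}$, the earlier $p^{t,k}$) also controls the law of $S_t^{j-1}$ --- is precisely what the paper's term-by-term minimization skips over, and your KKT plan will run into it rather than around it. Take $N=3$, $m=2$, and mutually orthogonal $g_1,g_2,g_3$ with norms $(a_1,a_2,a_3)=(2,1,1)$. For orthogonal gradients one computes
\[
\sum_{j=1}^{2}\Exp{\|D_j\|_2^2}
\;=\;T\Bigl(2-\textstyle\sum_i p_i^2\Bigr)-3\|g^t\|_2^2+2\textstyle\sum_i p_i\,a_i^2,
\qquad T:=\textstyle\sum_i a_i^2/p_i,
\]
and at $p^*=(\tfrac12,\tfrac14,\tfrac14)$ the partial derivatives are $(-34,-32,-32)$, so simplex stationarity fails; indeed at $p=(0.51,0.245,0.245)$ the objective drops from $13$ to about $12.988$. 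Thus your KKT verification would not go through, and the same coupling that you flagged undermines the paper's enlargement argument as well: claim (c) as stated does not hold in this example. Your instinct that this step was the delicate one was exactly right.
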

The proposed estimator is therefore unbiased and its variance decreases super-linearly in $m$ (by (b) and (d)). Although we were unable to prove a regret bound for this estimator, (c) suggests that it is still reasonable to use our algorithm.
To take into account the higher rate of replacement of the $h_i$s, we propose using the following $\varepsilon_t$ sequence, which is based on the mini-batch equivalent of Lemma \ref{recursion} and the inequality $\min_{i \in [N]}\pi_i^{t} \geq m\varepsilon_t$ \cite{yu2012, MILBRODT1992243, article3}:
\begin{equation}
    \label{minibatchepsilonsequence}
    \varepsilon_t := \frac{1}{C^{1 - \delta/3}(C + m(t-1))^{\delta/3}}
\end{equation}

\section{Experiments}
\label{sec6}
In this section, we present results of experiments with our algorithm. We start by validating our theoretical results through a synthetic experiment. We then show that our proposed method outperforms existing methods on real world datasets. Finally, we identify settings in which adaptive importance sampling can lead to significant performance gains for the final optimization algorithm.

In all experiments, we added $l_2$-regularization to the model's loss and set the regularization parameter $\mu = 1$. We ran SGD (\ref{SGD}) with decreasing step sizes $\alpha_t = \frac{m}{2NL + m \mu t}$ where $m$ is the batch size following \cite{stich2019unified}. We experimented with 3 different samplers in addition to our proposed sampler: Uniform, Multi-armed bandit Sampler (\emph{Mabs}) \cite{DBLP:journals/corr/abs-1708-02544}, and Variance Reducer Bandit (\emph{Vrb}) \citesupp{DBLP:conf/colt/Borsos0L18}. The hyperparameters of both \emph{Mabs} and \emph{Vrb} are set to the ones prescribed by the theory in the original papers. For our propsoed sampler \emph{Avare}, we use the epsilon sequence given by ($\ref{minibatchepsilonsequence}$) with $C = N$, $\delta = 1$, and initialized $h_i = 0$ for all $i \in [N]$. For each sampler, we ran SGD 10 times and averaged the results. The shaded areas represent a one standard deviation confidence interval.

\begin{figure}[b]
  \centering
  \includegraphics[width=5.0in,keepaspectratio]{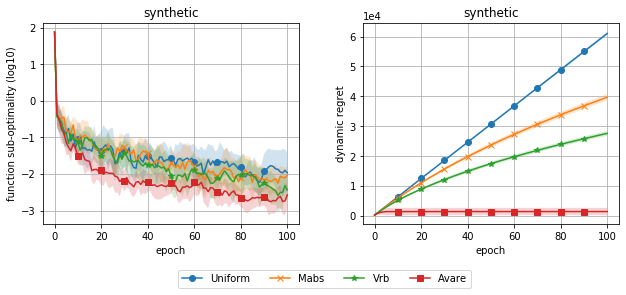}
  \caption{Evolution of function sub-optimality (left) and dynamic regret (right) as a function of data passes on a synthetic dataset with an $l_2$ regularized logistic regression model and different samplers.}
  \label{fig:synthetic}
\end{figure}

To validate our theoretical findings, we randomly generated a dataset for binary classification with $N=100$ and $d=10$.
We then trained a logistic regression model on the generated data, and used a batch size of 1 to match the setting of our regret bound.
The results of this experiment are depicted in figure \ref{fig:synthetic}, and show that \emph{Avare} outperforms the other methods, achieving significantly lower dynamic regret and faster convergence.

For our second experiment, we tested our algorithm on three real world datasets: MNIST, IJCNN1 \cite{10.1145/1961189.1961199}, and CIFAR10. We used a softmax regression model, and a batch size of 128 sampled with replacement. The results of this experiment can be seen in figure \ref{fig:real}. The third column shows the relative error which we define as $\left[c_t(p^t) - \min_{p \in \Delta} c_t(p)\right]/\min_{p \in \Delta} c_t(p)$. In all three cases, \emph{Avare} achieves significantly smaller dynamic regret, with a relative error quickly decaying to zero. The performance gains in the final optimization algorithm are clear in both MNIST and IJCNN1, but are not noticeable in the case of CIFAR10. 

To determine the effectiveness of non-uniform sampling in accelerating the optimization process, previous work \cite{DBLP:conf/icml/ZhaoZ15, DBLP:journals/corr/abs-1708-02544} has suggested to look at the ratio of the maximum smoothness constant and the average smoothness constant. We show here that this is the wrong measure to look at when using adaptive probabilities. In particular, we argue that the ratio of the variance with uniform sampling at the loss minimizer to the optimal variance at the loss minimizer is much more informative of the performance gains achievable through adaptive importance sampling. For each dataset, these ratios are displayed in Table \ref{table:ratios}, supporting our claim. We suspect that for large models capable of fitting the data nearly perfectly, our proposed ratio will be large since many of the per-example gradients at the optimum will be zero. We therefore expect our method to be particularly effective in the training of models of this type. We leave such experiments for future work. Finally, in appendix \ref{appendix4}, we propose an extension of our method to constant step-size SGD, and show that it preserves the performance gains observed when using decreasing step-sizes.

\begin{figure}[t]
  \centering
  \includegraphics[width=5.0in,keepaspectratio]{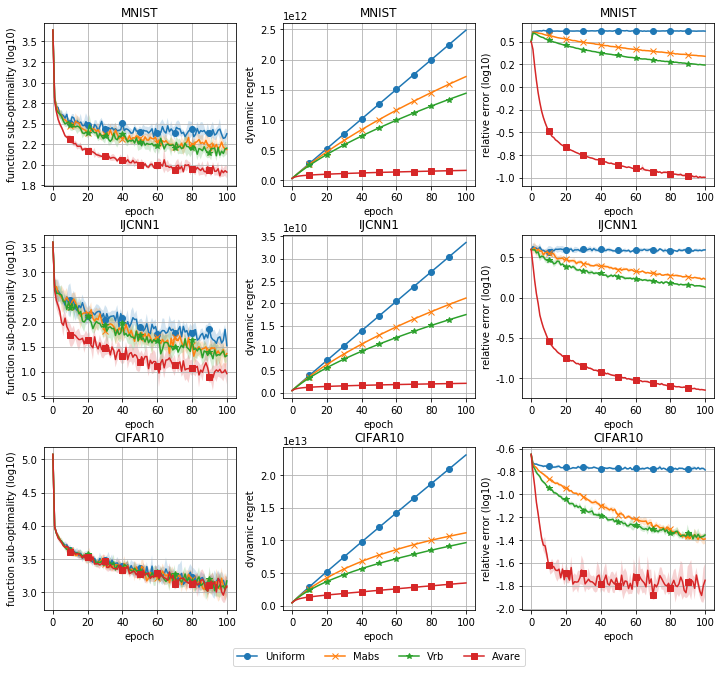}
  \caption{Comparison of the performance of importance samplers on an l2-regularized softmax regression model on three real world datasets: MNIST (top), IJCNN1 (middle), CIFAR10 (bottom).}
  \label{fig:real}
\end{figure}

\begin{table}[t]
\caption{Useful ratios in determining the effectiveness of variance reduction through importance sampling. $L_i$ is the smoothness constant of $f_i$, $L_{max} = \max_{i \in [N]}{L_i}$, and $g_i^{*}$ and is the gradient of $f_i$ at the loss minimizer $x^{*}$.}
\centering
    \begin{tabular}{lcccl}\toprule
        Dataset & $\frac{N L_{max}}{\sum_{i=1}^{N} L_{i}}$ & $\frac{N \sum_{i=1}^{N}\norm{g_i^{*}}_2^2}{(\sum_{i=1}^{N}\norm{g_i^{*}}_2)^2}$ \\\midrule
        Synthetic & 1.69 & 4.46 \\
        MNIST & 3.28 & 5.08 \\
        IJCNN1 & 1.12 & 4.83 \\
        CIFAR10 & 3.40 & 1.14 \\\bottomrule
    \end{tabular}
    \label{table:ratios}
\end{table}

\newpage

\section*{Broader Impact}
Our work develops a new method for variance reduction for stochastic optimization and sampling algorithms.
On the optimization side, we expect our method to be very useful in accelerating the training of large scale neural networks, particularly since our method is expected to provide significant performance gains when the model is able to fit the data nearly perfectly.
On the sampling side, we expect our method to be useful in accelerating the convergence of MCMC algorithms, opening the door for the use of accurate Bayesian methods at a large scale.

\begin{ack}
This research was supported by an NSERC discovery grant. We would like to thank the anonymous reviewers for
their useful comments and suggestions.
\end{ack}

{\footnotesize
\bibliography{mendeley}}
\bibliographystyle{unsrt}

\newpage

\begin{appendices}
    \section{Algorithm}
    \label{appendix1}
    \subsection{Proof of Lemma \ref{solutionoptproblem}}
    \solutionoptproblem*
    \begin{proof}
    ~
    \paragraph{Edge case and well-definedness.}
    If $a_i = 0$ for all $i \in [N]$, then the objective function is identically $0$ over $\Delta(\varepsilon)$ for all $\varepsilon \in [0, 1/N]$, so that any $p \in \Delta(\varepsilon)$ is a solution (we set $(1/0)0 := 0$ in the objective). Else there exists an $i \in [N]$ such that $a_i > 0$, and therefore $a_{\pi(1)} > 0$. Now:
    \begin{align*}
        &\frac{\varepsilon}{1-(N-1)\varepsilon} \leq 1 \\
        &\Leftrightarrow \varepsilon \leq 1 - (N-1)\varepsilon \\
        &\Leftrightarrow 0 \leq 1 - N\varepsilon \\
        &\Leftrightarrow N\varepsilon \leq 1\\
        &\Leftrightarrow \varepsilon \leq \frac{1}{N}
    \end{align*}
    The last inequality is true, so it implies the first, and we have:
    \begin{equation*}
        a_{\pi(1)} \geq \varepsilon\frac{a_{\pi(1)}}{1-(N-1)\varepsilon}
    \end{equation*}
    Therefore $\rho$ is well defined and is $\geq 1$. As a consequence, $\lambda$ is also well defined and is $> 0$, making $p^{*}$ in turn well-defined.
    
    \paragraph{Optimality proof.}It is easily verified that problem (\ref{optproblem}) is convex. Its Lagrangian is given by:
    \begin{equation*}
        \mathcal{L}(p, \mu, \nu) = \sum_{i=1}^{N} \frac{1}{p_i}a_i^2 - \sum_{i=1}^{N}\mu_i (p_i - \varepsilon) + \nu \left(\sum_{i=1}^{N}p_i - 1\right)
    \end{equation*}
    and the KKT conditions are:
    \begin{align*}
        \text{(Stationarity)} &\quad p_i = \frac{a_i}{\sqrt{\nu - \mu_i}} \\
        \text{(Complementary slackness)} &\quad \mu_i = 0 \thinspace \lor \thinspace p_i = \varepsilon \\
        \text{(Primal feasibility)} &\quad p_i \geq \varepsilon \thinspace \land \thinspace \sum_{j=1}^{N} p_j = 1 \\
        \text{(Dual feasibility)} &\quad \mu_i \geq 0
    \end{align*}
    
    By convexity of the problem, the KKT conditions are sufficient conditions for global optimality. To show that our proposed solution is optimal, it therefore suffices to exhibit constants $(\mu^{*}_i)_{i=1}^{N}$ and $\nu^{*}$ that together with $p^{*}$ satisfy these conditions.
    Let:
    \begin{align*}
        \nu^{*} &:= \lambda^2 \\
        \mu^{*}_i &:= \begin{cases}
        0 &\text{if $i \in \{\pi(1), \dots, \pi(\rho)\}$}\\
        \nu^{*} - a_i^{2}/\varepsilon^2 &\text{otherwise}
        \end{cases}
    \end{align*}
    
    Note that the $\mu_i^{*}$s are well defined since when $\varepsilon = 0$, $\rho = N$, and $\mu_i^{*} = 0$ for all $i \in [N]$. We claim that the triplet $(p^{*}, (\mu^{*}_i)_{i=1}^{N}, \nu^{*})$ satisfies the KKT conditions.
    
    Stationarity and complementary slackness are immediate from the definitions. The first clause of primal feasibility holds by definition of $p^{*}_i$ for $i \in \{\pi(\rho + 1), \dots, \pi(N)\}$. In the other case we have:
    \begin{align*}
        p_i^{*} &= \frac{a_i}{\lambda} \\ 
        &= \frac{a_i}{\sum_{j=1}^{\rho}a_{\pi(j)}} \left(1 - (N-\rho)\varepsilon\right) \\
        &\geq \frac{a_{\pi(\rho)}}{\sum_{j=1}^{\rho}a_{\pi(j)}} \left(1 - (N-\rho)\varepsilon\right) \\
        &\geq \varepsilon
    \end{align*}
    where in the third line we used the fact that $\pi$ orders $\{a_i\}_{i=1}^{N}$ in decreasing order, and in the last line we used the inequality in the definition of $\rho$. For the second clause of primal feasibility:
    \begin{equation*}
        \sum_{j=1}^{N} p^{*}_{j} = \sum_{j=1}^{N} p^{*}_{\pi(j)} = \sum_{j=1}^{\rho} \frac{a_{\pi(i)}}{\lambda} + \sum_{j=\rho+1}^{N} \varepsilon = \left(1 - (N - \rho) \varepsilon\right) + (N - \rho) \varepsilon = 1
    \end{equation*}
    Finally, dual feasibility holds by definition of $\mu^{*}_i$ for $i \in \{\pi(1), \dots, \pi(\rho)\}$. In the other case we have:
    \begin{align*}
        \mu_i^{*} &= \nu^{*} - \frac{a_i^2}{\varepsilon^2} \\
        &= \lambda^2 - \frac{a_i^2}{\varepsilon^2} \\
        &= \left(\lambda + \frac{a_i}{\varepsilon}\right) \left(\lambda - \frac{a_i}{\varepsilon}\right) \\
        & \geq \left(\lambda + \frac{a_{i}}{\varepsilon}\right)\left(\lambda - \frac{a_{\pi(\rho+1)}}{\varepsilon}\right)
    \end{align*}
    The first factor is positive by positivity of $\lambda$ and non-negativity of the $a_i$s. For the second factor, we have by the maximality of $\rho$:
    \begin{align}
    &a_{\pi(p+1)} < \varepsilon \frac{\sum_{j=1}^{\rho+1}a_{\pi(j)}}{1 - (N - \rho - 1) \varepsilon} \nonumber \\
    &\Rightarrow a_{\pi(\rho + 1)}(1 - (N - \rho) \varepsilon + \varepsilon) < \varepsilon \sum_{i=1}^{\rho}a_{\pi(j)} + \varepsilon a_{\pi(\rho+1)} \nonumber \\
    &\Rightarrow a_{\pi(\rho+1)}(1 - (N-\rho)\varepsilon) < \varepsilon \sum_{i=1}^{\rho}a_{\pi(j)} \nonumber \\
    &\Rightarrow \frac{a_{\pi(\rho+1)}}{\varepsilon} < \lambda
    \label{usefulineq}
    \end{align}
    Therefore the second factor is also positive, and dual feasibility holds.
    \end{proof}
    
    \subsection{Implementation and complexity}
    In this section of the appendix, we provide pseudocode for the implementation of the algorithm and discuss its computational complexity. 
    
    \begin{algorithm}
    \linespread{1.35}\selectfont
    \caption{Implementation of the proposed sampler}
    \label{pseudocodesampler}
    \begin{algorithmic}[1]
    
    \Class{Sampler}:       
        \Procedure{Initialize}{$\{\norm{h_i}_2\}_{i=1}^{N}$}
        \State $self.H \gets Array(\{\norm{h_i}_2\}_{i=1}^{N})$
        \State $self.T$ $\gets$ $OST(keys=\{\norm{h_i}_2\}_{i=1}^{N}, values=[N])$
        \State $self.CS \gets Array(\{\sum_{j=1}^{i}\norm{h_{\pi(j)}}_2\}_{i=1}^{N})$
        \EndProcedure
        
        \vspace{1ex}
        
        \Procedure{Delete}{$x$}
        \State $r$ $\gets$ $self.T.rank(x)$
        \State $self.T.delete(x)$
        \State $self.CS[r:N]$ $\gets$ $self.CS[r:N] - x$
        \EndProcedure
        
        \vspace{1ex}
        
        \Procedure{Insert}{$x$, $i$}
        \State $self.T.insert(key=x, value=i)$
        \State $r$ $\gets$ $self.T.rank(x)$
        \State $self.CS[r:N]$ $\gets$ $self.CS[r:N] + x$
        \EndProcedure
        
        \vspace{1ex}
        
        \Procedure{Update}{$\norm{h_{I}}_2$, $I$}
        \State $self.delete(self.H[I])$
        \State $self.H[I] \gets \norm{h_{I}}_2$
        \State $self.insert(\norm{h_{I}}_2, I)$
        \EndProcedure
        
        \vspace{1ex}
        
        \Function{Search}{$\varepsilon$, node}
        \State $r \gets self.T.rank(node)$
        \IfThen{$r == N$}{\Return $r$}
        \State $c \gets 1 - (N - r)\varepsilon$
        \If{$ c \cdot node.key < \varepsilon \cdot self.CS[r]$}
            \State \Return $self.search(\varepsilon, node.left)$
        \Else
            \State $d = 1 - (N - r - 1)\varepsilon$
            \If{$d \cdot node.successor.key$ < $\varepsilon \cdot self.CS[r+1]$}
                \State \Return $r$
            \Else
                \State \Return $self.search(\varepsilon, node.right)$
            \EndIf
        \EndIf
        \EndFunction
        
        \vspace{1ex}
        
        \Function{Sample}{$\varepsilon$}
        \State $\rho \gets self.search(\varepsilon, self.T.root)$
        \State $\lambda \gets self.CS[\rho]/\left(1 - (N - \rho)\varepsilon\right)$
        \State $b \sim Bernoulli((N-\rho)\varepsilon)$
        \If{b == 1}
            \State Sample an index $\tilde{I}$ uniformly from $\{\rho+1, \dots, N\}$.
        \Else
            \State $u \sim Uniform([0, 1])$
            \State Find the first index $\tilde{I}$ for which $\lambda u \leq self.CS[\tilde{I}]$ using binary search.
        \EndIf
        \State $I \gets self.T.select(\tilde{I})$
        \State \Return $I$
        \EndFunction
    \EndClass
    \end{algorithmic}
    \end{algorithm}
    
    \begin{algorithm}
    \linespread{1.35}\selectfont
    \caption{AVARE}
    \label{ISSGD/ISSGLD}
    \hspace*{\algorithmicindent}\textbf{Input}: $x_1, T, \{\alpha_t\}_{t=1}^{T}, \{\norm{h_i^1}_2\}_{i=1}^{N}, C \geq N$
    \begin{algorithmic}[1]
        \State $sampler \gets$ \textsc{Sampler}$(\{\norm{h_i^1}_2\}_{i=1}^{N})$
        \For{$t = 1, 2, \dots, T$}
        \State Set $\varepsilon_t$ according to (\ref{epsilonsequence})
        \State $I_t \gets sampler.sample(\varepsilon_t)$
        \State Obtain $x_{t+1}$ using (\ref{SGD}) or (\ref{SGLD}) and the estimator (\ref{estimator}).
        \State $sampler.update(\norm{g_{I_t}}_2, I_t)$
        \EndFor
        \State \Return $x_{T+1}$
    \end{algorithmic}
    \end{algorithm}
    
    \subsubsection{Implementation}
    We assume in algorithm \ref{pseudocodesampler} that an order statistic tree (OST) (see, for example, \citesupp{DBLP:books/daglib/0023376}, chapter 14) can be instantiated and that the ordering in the tree is such that the key of the left child of a node is greater than or equal to that of the node itself. Furthermore, we assume that the $rank(x)$ method returns the position of $x$ in the order determined by an inorder traversal of the tree. Finally, the $select(i)$ method returns the value of the $i^{th}$-largest key in the tree.
    
    The algorithm works as follows. At initialization, three data structures are initialized: an arrray $H$ holding the gradient norms according to the original indices, an order statistic tree $T$ holding the gradient norms as keys and the original indices as values, and an array $CS$ holding the cumulative sums of the gradient norms, where the sums are accumulated in the (decreasing) order that sorts the gradients norms in $T$.
    
    The $sample(\varepsilon)$ method allows to sample from the optimal distribution on $\Delta(\varepsilon)$. It uses the $search(\varepsilon, node)$ method to find $\rho$ by searching the tree $T$ and using the maximality property of $\rho$. Once $\rho$ is determined, $\lambda$ can be calculated. Using the fact that the cumulative sums are proportional to the CDF of the distribution, the algorithm then samples an index using inverse-transform sampling. The sampled index is then transformed back to an index in the original order using the $select$ method of the tree $T$.
    
    Finally, the $update(\norm{h_{I}}_2, I)$ method replaces the gradient norm of a given index by a new one. It calls the methods $delete(x)$ and $insert(x, i)$ which perform the deletion and insertion while maintaining the tree $T$ and array $CS$.
    
    \subsubsection{Complexity}
    First, let us analyze the cost of running the $update(\norm{h_{I}}_2, I)$ method. For the array $H$, we only use random access and assignment, which are both $O(1)$. For the tree $T$, we use the methods $insert$, $delete$, and $rank$, all of which are $O(\log N)$. Finally, for the array $CS$ we add and subtract from a sub-array, which takes $O(N)$ time, although this operation is vectorized and very fast in practice.
    
    Let us now look at the cost of running $sample(\varepsilon)$. The $search$ method is recursive, but will only be called at most as many times as the height of the tree, which is $O(\log N)$. Now for each call of $search$, both the $rank$ and $successor$ methods of the tree $T$ require $O(\log N)$ time. The rest of the $search$ method only requires $O(1)$ operations. The total cost of the $search$ method is therefore $O(\log^2 N)$. For the rest of the $sample$ method, the operations that dominate the cost are the $select$ method of the tree $T$, which takes $O(\log N)$ time, and the binary search in the else branch, which also runs in $O(\log N)$ time. Consequently, the total cost of the sample method is $O(\log^2 N)$.
    
    The total per iteration cost of using the proposed sampler is therefore $O(N)$ vectorized operations, and $O(\log^2 N)$ sequential (non-vectorized) operations. The total memory cost is $O(N)$.
    
    \section{Theory}
    \label{appendix2}
    We restate our assumptions here for ease of reference.
    \boundedgradients*
    \smoothness*
    \contraction*
    \subsection{Proof of proposition \ref{contractionsgdsgld}}
    \contractionsgdsgld*
    \begin{proof}
    Conditioning on the knowledge of $\{I_{1}, \dots, I_{t-1}\}$ we have for SGD:
    \begin{align*}
        \Exp{\norm{x_{t+1}^{SGD}-x_{t}^{SGD}}_2} &= \Exp{\alpha_t \frac{1}{p_{I_t}^{t}} \norm{g_{I_t}^t}_2} \\
        &= \alpha_t \sum_{i=1}^{N} \norm{g_i^t}_{2} \\
        &\leq \alpha_t NG \\
    \end{align*}
    and for SGLD we have:
    \begin{align*}
        \Exp{\norm{x_{t+1}^{SGLD}-x_{t}^{SGLD}}_2} &\leq \Exp{\alpha_t\frac{1}{p_{I_t}^{t}}\norm{g_{I_t}^{t}}_2} + \Exp{\norm{\xi_t}_2} \\
        &\leq \alpha_t \sum_{i=1}^{N} \norm{g_i^t}_{2} + \sqrt{\Exp{\norm{\xi_t}_2^2}} \\
        &\leq \alpha_t NG + \sqrt{\alpha_t}\sqrt{2d} \\
        &\leq \sqrt{\alpha_t} \left(NG\sqrt{\alpha_1} + \sqrt{2d}\right)
    \end{align*}
    where in the first line we used the triangle inequality, in the second we used Jensen's inequality, and in the last we used the fact that $\{\alpha_t\}_{t=1}^{T}$ is decreasing. Replacing with the value of $\alpha_t$ we obtain the result.
    \end{proof}
    
    \subsection{Proof of proposition \ref{restriction}}
    The following proof is taken from (\cite{DBLP:conf/colt/Borsos0L18}, Lemma 6). 
    \restriction*
    \begin{proof}
        By Lemma \ref{solutionoptproblem} we have:
        \begin{align*}
            \min_{p \in \Delta(\varepsilon)}\sum_{i=1}^{N}\frac{1}{p_i}a_i^2  &= \lambda \sum_{i=1}^{\rho} a_{\pi(i)} + \sum_{i=\rho+1}^{N} \frac{a^{2}_{\pi(i)}}{\varepsilon} \\
            &\leq \lambda^2 \left(1 - (N - \rho)\varepsilon\right) + \varepsilon \sum_{i=\rho+1}^{N} \frac{a^2_{\pi(\rho+1)}}{\varepsilon^2} \\
            &\leq \lambda^2 \left(1 - (N - \rho)\varepsilon\right) + (N-\rho)\varepsilon\lambda^2 \\
            &= \lambda^2
        \end{align*}
        where in the third line we used inequality (\ref{usefulineq}) from the proof of Lemma \ref{solutionoptproblem}.
        Now for the case $\varepsilon = 0$ we have $\rho = N$, so the second term in the first line is zero and the inequality becomes an equality:
        \begin{equation}
            \label{optfunctionvalue}
            \min_{p \in \Delta} \sum_{i=1}^{N}\frac{1}{p_i}a_i^2 = \left(\sum_{i=1}^{N}a_i\right)^2
        \end{equation}
        The difference is therefore bounded by:
        \begin{align*}
            \min_{p \in \Delta(\varepsilon)}\sum_{i=1}^{N}\frac{1}{p_i}a_i^2 - \min_{p \in \Delta}\sum_{i=1}^{N}\frac{1}{p_i}a_i^2 &\leq \frac{\left(\sum_{i=1}^{\rho}a_{\pi(i)}\right)^2}{(1 - (N - \rho)\varepsilon)^2} - \left(\sum_{i=1}^{N} a_i\right)^2 \\
            &\leq \left(\frac{1}{(1-N\varepsilon)^2} - 1\right) \left(\sum_{i=1}^{N} a_i\right)^2 \\
            &\leq 6 \varepsilon N \left(\sum_{i=1}^{N} a_i\right)^2
        \end{align*}
        where in the last line we used the inequality $\frac{1}{(1-x)^2} - 1 \leq 6x$ for $x \in [0, 1/2]$ which gives the restriction $\varepsilon \in [0, 1/2N]$.
    \end{proof}    
    
    \subsection{Proof of Lemma \ref{perstepbound}}
    \perstepbound*
    \begin{proof}
        Let $t \geq t_0$ and $\widetilde{p}^{t} := \argmin_{p \in \Delta} \{\widetilde{c}_t(p)\}$. We have the following decomposition:
        \begin{equation*}
        \Exp{c_{t}(p^{t}) - c_{t}(q^t)} = 
        \underbrace{\Exp{c_t(p^{t}) - \widetilde{c}_t(p^{t})}}_{\text{(A)}}
        + \underbrace{\Exp{\widetilde{c}_t(p^{t}) - \widetilde{c}_t(\widetilde{p}^{t})}}_{\text{(B)}}
        + \underbrace{\Exp{\widetilde{c}_t(\widetilde{p}^{t}) - c_{t}(q^t)}}_{\text{(C)}}
        \end{equation*}
        We bound each term separately:
        \begin{align*}
            \text{(A)} &= \Exp{\sum_{i=1}^{N}\frac{1}{p_i^{t}}\left(\norm{g_i^t}_2^2 - \norm{h_i^t}_2^2\right)} \\
            &= \Exp{\sum_{i=1}^{N}\frac{1}{p_i^t}\left(\norm{g_i^t}_2 - \norm{h_i^t}_2\right)\left(\norm{g_i^t}_2 + \norm{h_i^t}_2\right)} \\
            &\leq \Exp{\sum_{i=1}^{N}\frac{2G}{p_i^t}\norm{g_i^t - h_i^t}_2} \\
            &\leq \frac{2G}{\varepsilon_t}\Exp{\sum_{i=1}^{N}\norm{g_i^t - h_i^t}_2}
        \end{align*}
        where in the third line we used Assumption \ref{boundedgradients} and the reverse triangle inequality, and in the last line we used the fact that $p^t \in \Delta(\varepsilon_t)$.
        Since $t \geq t_0$, we can apply proposition \ref{restriction} on (B) to obtain:
        \begin{align*}
            \text{(B)} &\leq \Exp{6\varepsilon_t N \left(\sum_{i=1}^{N}\norm{h_i^t}_2\right)^2} \leq 6 \varepsilon_t G^2 N^3
        \end{align*}
        where the second inequality uses Assumption \ref{boundedgradients}.
        Finally, using the optimal function values (\ref{optfunctionvalue}) we have for (C):
        \begin{align*}
            \text{(C)} &= \Exp{\left(\sum_{i=1}^{N}\norm{h_i^t}_2\right)^2 - \left(\sum_{i=1}^{N}\norm{g_i^t}_2\right)^2} \\
            &= \Exp{\left(\sum_{i=1}^{N}\left(\norm{h_i^t}_2 - \norm{g_i^t}_2\right) \right) \left(\sum_{i=1}^{N}(\norm{g_i^t}_2 + \norm{h_i^t}_2)\right)} \\
            &\leq 2GN \Exp{\sum_{i=1}^{N}\norm{g_i^t - h_i^t}_2} \\
            &\leq \frac{2G}{\varepsilon_t}\Exp{\sum_{i=1}^{N}\norm{g_i^t - h_i^t}_2}
        \end{align*}
        where we again used Assumption \ref{boundedgradients} and the reverse triangle inequality in the third line. the last inequality follows from the fact that $\varepsilon_t \leq \frac{1}{N}$. Combining the three bounds gives the result.
    \end{proof}
    
    \subsection{Proof of Lemma \ref{recursion}}
    \recursion*
    \begin{proof}
        Conditioning on the knowledge of $\{I_1, \dots, I_{t-1}\}$ we have:
        \begin{align*}
            \Exp{\sum_{i=1}^{N}\norm{g_i^{t+1} - h_i^{t+1}}_2} &= \sum_{j=1}^{N} p_j^{t} \left[\norm{g_j^{t+1} - g_j^{t}}_2 + \sum_{\substack{i=1\\ i \ne j}}^{N}\norm{g_i^{t+1} - h_i^{t}}_2\right] \\
            &\leq \sum_{j=1}^{N} p_{j}^{t} \left[\norm{g_j^{t+1} - g_j^{t}}_2 + \sum_{\substack{i=1\\ i \ne j}}^{N}\left(\norm{g_i^{t+1} - g_i^{t}}_2 + \norm{g_i^t - h_i^t}_2\right)\right] \\
            &= \sum_{j=1}^{N} p_{j}^{t} \left[\sum_{i=1}^{N}\norm{g_i^{t+1} - g_i^{t}}_2 + \sum_{\substack{i=1\\ i \ne j}}^{N} \norm{g_i^t - h_i^t}_2 \right] \\
            &\leq NL \sum_{j=1}^{N}p_j^t \norm{x_{t+1} - x_{t}}_2 + \sum_{i=1}^{N} \left(\sum_{\substack{j=1 \\ j \ne i}}^{N} p_j^t\right) \norm{g_i^t - h_i^t}_2 \\
            &= NL \Exp{\norm{x_{t+1} - x_t}_2} + \sum_{i=1}^{N} (1 - p_i^t)\norm{g_i^t - h_i^t}_2 \\
            &\leq \frac{NLA}{(B+t-1)^{\delta}} + (1 - \varepsilon_t) \sum_{i=1}^{N} \norm{g_i^t - h_i^t}_2
        \end{align*}
        where in the fourth line we used Assumption \ref{smoothness}, and in the last line we used Assumption \ref{contraction} and the fact that $p^t \in \Delta(\varepsilon_t)$.
        Taking expectation with respect to the choice of $\{I_1, \dots, I_{t-1}\}$ on both sides we get the result.
    \end{proof}
    
    \subsection{Lemma \ref{solutionrecursion}}
    Before proving Theorem \ref{regretbound}, we first state and prove the following solution of the recursion of Lemma \ref{recursion} assuming the sequence $\{\varepsilon_t\}_{t=1}^{T}$ is given by (\ref{epsilonsequence}).
    \begin{lemma}
    \label{solutionrecursion}
        Assuming the use of the sequence $\{\varepsilon_t\}_{t=1}^{T}$ given by (\ref{epsilonsequence}) we have:
        \begin{equation*}
            \Exp{\sum_{i=1}^{N}\norm{g_i^{t} - h_i^{t}}_2} \leq \frac{K}{(C+t-1)^{2\delta/3}}
        \end{equation*}
        where:
        \begin{equation*}
            K := \max \left\{
            \frac{3C^{1-\delta/3} D}{3-2\delta},  
            C^{2\delta/3} \sum_{i=1}^{N} \norm{g_i^{1} - h_i^{1}}_2
            \right\} \\
        \end{equation*}
        and:
        \begin{equation*}
            D := \begin{cases}
            NLA &\text{if $B \geq C$} \\
            (\frac{C}{B})^{\delta} NLA &\text{if $B < C$}
            \end{cases}
        \end{equation*}
        where $A$, $B$, and $\delta$ are as in Assumption \ref{contraction}.
    \end{lemma}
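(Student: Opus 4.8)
The plan is to prove the claimed bound by induction on $t$, treating the inequality of Lemma \ref{recursion} as a scalar recursion for the quantity $\varphi(t) := \Exp{\sum_{i=1}^{N}\norm{g_i^t - h_i^t}_2}$, namely $\varphi(t+1) \leq NLA/(B+t-1)^{\delta} + (1-\varepsilon_t)\varphi(t)$. For the base case $t = 1$, I would simply observe that the second argument of the maximum defining $K$ gives $K \geq C^{2\delta/3}\varphi(1)$, so that $\varphi(1) \leq K/C^{2\delta/3} = K/(C+1-1)^{2\delta/3}$, exactly the claimed bound at $t=1$.

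Before the inductive step I would first tame the forcing term, replacing $NLA/(B+t-1)^{\delta}$ by $D/(C+t-1)^{\delta}$; this is precisely what the two-case definition of $D$ is designed for. When $B \geq C$ the denominator only grows, so $NLA/(B+t-1)^{\delta} \leq NLA/(C+t-1)^{\delta}$. When $B < C$, the ratio $(C+t-1)/(B+t-1) = 1 + (C-B)/(B+t-1)$ is decreasing in $t$ and hence maximized at $t=1$ with value $C/B$, giving $NLA/(B+t-1)^{\delta} \leq (C/B)^{\delta}NLA/(C+t-1)^{\delta}$. Either way the recursion becomes $\varphi(t+1) \leq D/(C+t-1)^{\delta} + (1-\varepsilon_t)\varphi(t)$.

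For the inductive step, writing $s := C+t-1 \geq C$ and inserting both the induction hypothesis $\varphi(t) \leq K/s^{2\delta/3}$ and the explicit form $\varepsilon_t = 1/(C^{1-\delta/3}s^{\delta/3})$, the cross term collapses because $s^{\delta/3}\cdot s^{2\delta/3} = s^{\delta}$, leaving $\varphi(t+1) \leq K/s^{2\delta/3} + D/s^{\delta} - K/(C^{1-\delta/3}s^{\delta})$. It then suffices to show this is at most $K/(s+1)^{2\delta/3}$, i.e. that $K\left(s^{-2\delta/3} - (s+1)^{-2\delta/3}\right) \leq K/(C^{1-\delta/3}s^{\delta}) - D/s^{\delta}$. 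I would bound the left-hand telescoping difference by the mean value theorem applied to $x \mapsto x^{-2\delta/3}$: its increment over $[s,s+1]$ is at most $(2\delta/3)\,s^{-2\delta/3-1}$, and since $2\delta/3 + 1 = \delta + (1-\delta/3)$ and $s \geq C$, this is at most $(2\delta K)/(3\,C^{1-\delta/3}s^{\delta})$. Multiplying the desired inequality through by $s^{\delta}$ then reduces everything to $K(3-2\delta)/(3C^{1-\delta/3}) \geq D$ (note $3-2\delta > 0$ since $\delta \leq 1$), which is exactly the first argument of the maximum defining $K$, closing the induction.

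The main obstacle is the inductive step: the interpolation exponent $2\delta/3$ must be chosen so that the stabilizing contribution $\varepsilon_t\varphi(t)$ cancels the forcing term precisely at order $s^{-\delta}$, and then the telescoping difference $s^{-2\delta/3} - (s+1)^{-2\delta/3}$ must be controlled tightly enough — via the derivative bound together with the monotonicity $s \geq C$ — that all surviving constants fold into the single condition on $K$. Lining up the factor $3-2\delta$ and the $C^{1-\delta/3}$ scaling with the two definitions of $K$ is the delicate bookkeeping; by contrast the case split on $B$ versus $C$ for the forcing term is routine.
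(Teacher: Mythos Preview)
Your proposal is correct and follows essentially the same approach as the paper: induction on $t$ with the identical base case, the same two-case bound $NLA/(B+t-1)^{\delta} \leq D/(C+t-1)^{\delta}$ for the forcing term, and an inductive step that hinges on a concavity/derivative bound for a power function together with $s \geq C$ to reduce everything to the condition $K \geq 3C^{1-\delta/3}D/(3-2\delta)$. The only cosmetic difference is in the algebra of the inductive step: the paper first absorbs $D/s^{\delta}$ into the dissipation term and then uses a difference-of-cubes factorization $Ex^{\delta/3}-\gamma = (E^3x^{\delta}-\gamma^3)/(E^2x^{2\delta/3}+E\gamma x^{\delta/3}+\gamma^2)$ before invoking concavity of $x^{2\delta/3}$, whereas you rearrange directly and bound the telescoping difference $s^{-2\delta/3}-(s+1)^{-2\delta/3}$ via the mean value theorem --- your route is slightly more streamlined but the substance is the same.
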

    \begin{proof}
        ~
        \paragraph{A simple inequality.}
        Suppose $B \geq C$, then:
        \begin{equation*}
            \frac{NLA}{(B + t - 1)^{\delta}} \leq \frac{NLA}{(C + t - 1)^{\delta}}
        \end{equation*}
        otherwise, we have:
        \begin{align*}
            \frac{NLA}{(B + t -1)^{\delta}} = \frac{(\frac{C}{B})^{\delta} NLA}{(C + (\frac{C}{B})(t-1))^{\delta}} \leq \frac{(\frac{C}{B})^{\delta} NLA}{(C + t - 1)^{\delta}}
        \end{align*}
        where the last inequality follows from the fact that $C > B$ and $t \geq 1$. We conclude that:
        \begin{equation*}
            \frac{NLA}{(B + t - 1)^{\delta}} \leq \frac{D}{(C + t - 1)^{\delta}}
        \end{equation*}
        \\
        \paragraph{Induction proof.} Let $\varphi(t) := \Exp{\sum_{i=1}^{N}\norm{g_i^{t} - h_i^{t}}_2}$, and let $K' := C^{2 - 2\delta/3}K$.
        For $t = 1$ the statement holds since:
        \begin{equation*}
            \varphi(1) = \frac{C^{2\delta/3}\varphi(1)}{(C+t-1)^{2\delta/3}} \leq \frac{K}{(C + t -1)^{2\delta/3}}
        \end{equation*}
        For $t \geq 1$ we have by Lemma \ref{recursion} and the above inequality:
        \begin{align*}
            \varphi(t+1) &\leq \frac{D}{(C+t-1)^{\delta}} + \left(1 - \frac{1}{C^{1 - \delta/3}(C + t - 1)^{\delta/3}}\right) \varphi(t) \\
            &= \frac{aC^{3 - \delta}D}{aC^{3-\delta}(C + t -1)^{\delta}} + \left(1 - \frac{1}{C^{1 - \delta/3}(C + t - 1)^{\delta/3}}\right) \varphi(t) \\
            &\leq \frac{K'}{aC^{3-\delta}(C + t -1)^{\delta}} + \left(1 - \frac{1}{C^{1 - \delta/3}(C + t - 1)^{\delta/3}}\right) \frac{K'}{C^{2 - 2\delta/3}(C + t -1)^{2\delta/3}}
        \end{align*}
        where $a = \frac{3}{3-2\delta}$ and where the last line follows by the induction hypothesis. To simplify notation let $x := (C + t -1)$, $E := C^{1-\delta/3}$, $\gamma := (1-\frac{1}{a}) = \frac{2\delta}{3}$. Then the above inequality can be rewritten as:
        \begin{align*}
            \varphi(t+1) &\leq K' \left(\frac{1}{E^2x^{2\delta/3}} - \frac{\gamma}{E^3x^{\delta}}\right) \\
            &= K' \frac{Ex^{\delta/3} - \gamma}{E^3x^{\delta}} \\
            &= K' \frac{E^3x^{\delta} - \gamma^3}{E^3x^{\delta}(E^2x^{2\delta/3} + E\gamma x^{\delta/3} + \gamma^2)} \\
            &\leq K' \frac{1}{E^2x^{2\delta/3} + E\gamma x^{\delta/3}}\\
        \end{align*}
        Now by concavity of $x^{2\delta/3}$ we have:
        \begin{equation*}
            E^2 \left[(x+1)^{2\delta/3} - x^{2\delta/3}\right] \leq E^2\frac{2\delta}{3} x^{2\delta/3 -1}
        \end{equation*}
        so that:
        \begin{align*}
            & E^2x^{2\delta/3} + E\gamma x^{\delta/3} \geq E^2 (x + 1)^{2\delta/3} \\
            &\Leftrightarrow E\gamma x^{\delta/3} \geq E^2 \left[(x + 1)^{2\delta/3} - x^{2\delta/3}\right] \\
            & \Leftarrow \gamma E x^{\delta/3} \geq \frac{2\delta}{3} E^2 x^{2\delta/3 - 1}\\
            & \Leftrightarrow x^{1-\delta/3} \geq C^{1-\delta/3} \\
            & \Leftrightarrow x \geq C \\
            & \Leftrightarrow (C + t - 1) \geq C \\
            & \Leftrightarrow t \geq 1
        \end{align*}
        The last statement is true, and therefore so is the first. Replacing in the bound on $\varphi(t+1)$ we get:
        \begin{equation*}
            \varphi(t+1) \leq \frac{K'}{C^{2-2\delta/3}(C + (t + 1) - 1)^{2\delta/3}} = \frac{K}{(C + (t + 1) - 1)^{2\delta/3}}
        \end{equation*}
        which finishes the proof.
    \end{proof}
    
    \subsection{Proof of Theorem \ref{regretbound}}
    \regretbound*
    \begin{proof}
        Combining Lemma \ref{solutionrecursion} with Lemma \ref{perstepbound} we have the following per-step bound for $t \geq t_0$:
        \begin{equation*}
            \Exp{c_t(p^{t}) - c_t(q^{t})} \leq \frac{4GKC^{1 - \delta/3} + \frac{6G^{2}N^{3}}{C^{1-\delta/3}}}{(C + t -1)^{\delta/3}} =: \frac{K'}{(C + t - 1)^{\delta/3}}
        \end{equation*}
        Summing over the time steps $\{t_0, \dots, T\}$ we get:
        \begin{align*}
            \sum_{t = t_0}^{T} \Exp{c_t(p^{t}) - c_t(q^{t})} \leq \sum_{t = t_0}^{T} \frac{K'}{(C + t - 1)^{\delta/3}}
        \end{align*}
        Therefore:
        \begin{align*}
            \Exp{\text{Regret}_{D}(T)} &= \sum_{t=1}^{t_0 - 1} \Exp{c_t(p^{t}) - c_t(q^{t})} + \sum_{t=t_0}^{T} \Exp{c_t(p^{t}) - c_t(q^{t})} \\
            &\leq \sum_{t=1}^{t_0 - 1} \Exp{c_t(p^{t})} + \sum_{t = t_0}^{T} \frac{K'}{(C + t - 1)^{\delta/3}} \\
            &\leq \sum_{t=1}^{t_0 - 1} \sum_{i=1}^{N} \Exp{\frac{1}{p_i^t} \norm{g_i^t}_2^2} + K' \int_{t=t_0-1}^{T} \frac{1}{(C + t - 1)^{\delta/3}} dt \\
            &\leq (t_0 - 1)\frac{NG^2}{\varepsilon_{t_0}} + K' \frac{(C + T - 1)^{1 - \delta/3} - (C + t_0 - 2)^{1-\delta/3}}{1 - \delta/3} \\
            &\leq (2^{3/\delta} - 1)\frac{N^2G^2}{\varepsilon_{t_0}} +  K' \frac{(C + T - 1)^{1 - \delta/3} - (C-1)^{1-\delta/3}}{1 - \delta/3}\\
            &= \mathcal{O}(T^{1-\delta/3})
        \end{align*}
        Where the line before the last follows from the fact that $1 \leq t_0 \leq  (2^{3/\delta} - 1)N + 1$ since $C \geq N$.
    \end{proof}
    
    \section{A new mini-batch estimator}
    \label{appendix3}
    \subsection{A class of unbiased estimators}
    It will be useful for our discussion to consider the following class $\mathcal{C}(p^{t, 1}, \dots, p^{t, m})$ of estimators:
    \begin{equation*}
        \hat{g}_b^t(p^{t, 1}, \dots, p^{t, m}):= \frac{1}{m}\sum_{j=1}^{m}\hat{g}_j^t(p^{t, j}) \quad\quad \hat{g}_j^t(p^{t,j}) := \left[\frac{1}{p^{t, j}_{I_t^j}}g_{I_t^j}^{t} + \sum_{k=1}^{j-1} g_{I_t^k}^t\right]
    \end{equation*}
    where each $p^{t, j}$ is a distribution on $[N]\setminus \{I_t^1, \dots, I_t^{j-1}\}$.
    The estimator we proposed in Section \ref{sec5} is:
    \minibatchestimator*
    where the indices $S_t = \{I_t^1, \dots, I_t^m\}$ are sampled without replacement according to $p^t$.
    Setting:
    \begin{equation*}
        p^{t, j}_i := \begin{cases}
        0 &\text{if $i \in \{I_t^1, \dots, I_t^{j-1}\}$} \\
        q_i^{t,j} &\text{otherwise}
        \end{cases}
    \end{equation*}
    we see that our proposed estimator belongs to the class of estimators $\mathcal{C}$ introduced above. The proofs of (a) and (b) of proposition \ref{propertiesminibatchestimator} below apply with almost no modification to any estimator in the class $\mathcal{C}$. A natural question then is which estimator in the above-defined class achieves minimum variance. We answer this in the proof of part $(c)$  below, and show that our proposed estimator (\ref{minibatchestimator}) with $p^t := \argmin_{p \in \Delta}\{c_t(p)\}$ achieves minimum variance.
    
    \subsection{Proof of proposition \ref{propertiesminibatchestimator}}
    \propertiesminibatchestimator*
    \begin{proof}
    ~
        \begin{enumerate}[\upshape (a)]
        \item For $j \in [m]$ and conditional on $S_t^{j-1}$ we have:
        \begin{align*}
            \Exp{\hat{g}_j^t} &= \sum_{\substack{i = 1 \\ i \notin S_t^{j-1}}}^{N} q^{t, j}_{i} \left[\frac{1}{q_i^{t,j}}g_i^t + \sum_{k \in S_t^{j-1}}g_{k}^t\right] \\
            &= \sum_{\substack{i = 1 \\ i \notin S_t^{j-1}}}^{N} g_i^t + \left(\sum_{k \in S_t^{j-1}}g_{k}^t\right) \underbrace{\left(\sum_{\substack{i = 1 \\ i \notin S_t^{j-1}}}^{N} q_i^{t,j}\right)}_{= 1} \\
            &= \sum_{i=1}^{N} g_i^t\\
            &= g^t
        \end{align*}
        Taking expectation with respect to the choice of $S_t^{j-1}$, and taking the average over $j \in [m]$ we get the result.
        
        \item We have:
        \begin{align*}
            \Exp{\norm{\hat{g}_b^t - g^t}_2^2} &= \Exp{\norm{\frac{1}{m}\sum_{j=1}^{m}\hat{g}_j^t - g^t}_2^2} \\
            &= \frac{1}{m^2}\sum_{j=1}^{m} \Exp{\norm{\hat{g}_j^t - g^t}_2^2} + \frac{2}{m^2} \sum_{\substack{j, i\\ j < i}}^{m} \Exp{\langle \hat{g}_j^t - g^t, \hat{g}_i^t - g^t\rangle}
        \end{align*}
        To show the claim, it is therefore enough to show that second term is zero. Let $j \in [m-1]$. Conditional on $S_t^{i-1}$ we have:
        \begin{equation*}
            \Exp{\langle \hat{g}_j^t - g^t, \hat{g}_i^t - g^t\rangle} = \langle \hat{g}_j^t - g^t, \Exp{\hat{g}_i^t} - g^t\rangle = 0
        \end{equation*}
        where we used the fact that the conditional expectation is zero by part (a). Taking expectation with respect to $S_t^{i-1}$ on both sides yields the result.
        \item As discussed in the previous section, (b) applies to all estimators in the class $\mathcal{C}$, so we have for all such estimators:
        \begin{equation*}
            \Exp{\norm{\hat{g}_b^t(p^{t, 1}, \dots, p^{t, m}) - g^t}_2^2} = \frac{1}{m^2}\sum_{j=1}^{m} \Exp{\norm{\hat{g}_j^t(p^{t, j}) - g^t}_2^2}
        \end{equation*}
        minimizing over $(p^{t,1}, \dots, p^{t,m})$ by minimizing each term with respect to its variable we get:
        \begin{equation*}
            \argmin_{(p^{t, 1}, \dots, p^{t, m})}\{\Exp{\norm{\hat{g}_b^t(p^{t, 1}, \dots, p^{t, m}) - g^t}_2^2}\} = \left(\frac{p^{t, *}}{1 - \sum_{k=1}^{j-1}p_{I_t^k}^{t, *}}\right)_{j=1}^{m}
        \end{equation*}
        where:
        \begin{equation*}
            p^{t, *} = \argmin_{p \in \Delta}\{c_t(p)\} = \frac{\norm{g_i^t}_2}{\sum_{j=1}^{N}\norm{g_j^t}_2}
        \end{equation*}
        Recalling that our estimator is in $\mathcal{C}$, and noticing that the optimal probabilities over $\mathcal{C}$ are feasible for our estimator we get the result.
        
        \item Fix $t \in [T]$. We drop the superscript $t$ from $p^t$, $q_{i}^{t,j}$, and $g_i^t$. We also drop the subscript $t$ from $I_{t}^{j}$ and $S_t^j$ to simplify notation.
        Define for $j \in [m]$:
        \begin{equation*}
            x_j := \frac{1}{q_{I^j}^{j}} g_{I^j} \quad\quad \mu_j := g^{t} - \sum_{k=1}^{j-1} g_{I^k}
        \end{equation*}
        We have from part (a):
        \begin{equation*}
            \Exp{x_j} = \mu_j
        \end{equation*}
        Before proceeding with the proof, we first derive an identity relating $q_i^{j+1}$ and $q_{i}^{j}$:
        \begin{align*}
            \frac{1}{q_i^{j+1}} &= \frac{1 - \sum_{k\in S^j} p_k}{p_i} \\
            &= \frac{1 - \sum_{k\in S^{j-1}} p_k - p_{I^{j}}}{p_i} \\
            &= \left(1 - \frac{p_{I^j}}{1 - \sum_{k\in S^{j-1}} p_k}\right)\left(\frac{1 - \sum_{k\in S^{j-1}} p_k}{p_i}\right) \\
            &= \left(1 - q_{I^j}^j\right) \frac{1}{q_i^j}
        \end{align*}
        Now, conditional on $S_t^{j}$, we have:
        \begin{align*}
            &\Exp{\norm{\hat{g}_{j+1}^t - g^t}_2^2} \\
            &= \Exp{\norm{x_{j+1} - \mu_{j+1}}_2^2} \\
            &= \Exp{\norm{x_{j+1}}_2^2} - \norm{\mu_{j+1}}_2^2 \\
            &= \left(\sum_{\substack{i=1\\ i \notin S^{j}}}^{N}
            \frac{1}{q_i^{j+1}} \norm{g_i}_2^2\right) - \norm{\mu_{j+1}}_2^2 \\
            &= \left(\sum_{\substack{i=1\\ i \notin S^{j-1}}}^{N}
            \frac{1}{q_i^{j+1}} \norm{g_i}_2^2\right) - \frac{1}{q_{I^j}^{j+1}}\norm{g_{I^j}}_2^2 - \norm{\mu_{j} - g_{I^{j}}}_2^2\\
            &= \left(1 - q_{I^j}^j\right) \left(\sum_{\substack{i=1\\ i \notin S^{j-1}}}^{N}
            \frac{1}{q_i^{j}} \norm{g_i}_2^2\right) - \left(1 - q_{I^j}^j\right)\frac{1}{q_{I^j}^{j}}\norm{g_{I^j}}_2^2 - \left(\norm{\mu_j}_2^2 - 2 \langle \mu_j, g_{I^j} \rangle  + \norm{g_{I^j}}_2^2\right) \\
            &= \left(1 - q_{I^j}^j\right) \left(\sum_{\substack{i=1\\ i \notin S^{j-1}}}^{N}
            \frac{1}{q_i^{j}} \norm{g_i}_2^2 - \norm{\mu_j}_2^2\right) - \left(\frac{1}{q^{j}_{I^j}} \norm{g_{I^j}}_2^2 - 2\langle g_{I^j}, \mu_j \rangle + q_{I^j}^{j} \norm{\mu_j}_2^2 \right) \\
            &=  \left(1 - q_{I^j}^j\right) \Exp{\norm{x_j - \mu_j}_2^2} - q_{I^j}^{j}  \norm{x_j - \mu_j}_2^2 \\
            &= \left(1 - q_{I^j}^j\right) \Exp{\norm{\hat{g}_{j}^t - g^t}_2^2} - q_{I^j}^{j}  \norm{\hat{g}_{j}^t - g^t}_2^2
        \end{align*}
        where the expectation in the last two lines is conditional on $S^{j-1}$. Taking expectation with respect to $S^{j-1}$ on both sides yields the result.
        \end{enumerate}
    \end{proof}
    
    \section{Extension to constant step-size SGD}
    \label{appendix4}
    While our analysis heavily relies on the assumption of decreasing step-sizes, we have found empirically that a slight modification of our method works just as well when a constant step-size is used. We propose the following epsilon sequence to account for the use of a constant step-size:
    \begin{equation}
        \varepsilon_t = \frac{1}{C^{1-\delta/3}  (C + m(t-1))^{\delta/3}} + p_{min}
        \label{eq:constantminibatchepsilonsequence}
    \end{equation}
    for a constant $p_{min} \in [0, 1/N]$ and the following condition on $C$:
    \begin{equation*}
        C \leq \frac{1}{\frac{1}{N} - p_{min}}
    \end{equation*}
    which ensures that $\varepsilon_1 \leq 1/N$.
    We ran the same experiment on MNIST, IJCNN1, and CIFAR10 as in Section \ref{sec6}, but with a constant step-size $\alpha_t = \alpha = \frac{m}{2NL}$, and the epsilon sequence (\ref{eq:constantminibatchepsilonsequence}) with $p_{min} = \frac{1}{5N}$ and $C = \frac{1}{\frac{1}{N} - p_{min}}$, and $\delta = 1$. The results are displayed in figure \ref{fig:constant_real}, showing a similar performance compared to the decreasing step-sizes case. Note that choosing a too small $p_{min}$ can start to deteriorate the performance of the algorithm. It is still unclear how to set $p_{min}$ so as to guarantee good performance, but our experiments suggest that setting $\frac{1}{5N}$ is a safe choice.
    
    \begin{figure}
      \centering
      \includegraphics[width=5.0in,keepaspectratio]{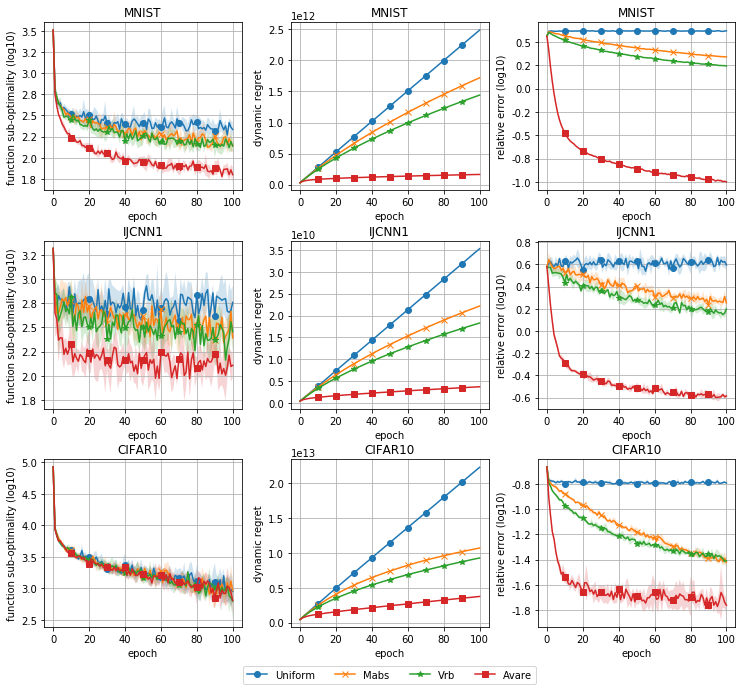}
      \caption{Comparison of the performance of importance samplers on an l2-regularized softmax regression model on three real world datasets: MNIST (top), IJCNN1 (middle), CIFAR10 (bottom). For this set of experiments, SGD was run using a constant step size.}
      \label{fig:constant_real}
    \end{figure}
    
    \newpage
    \bibliographystylesupp{plain}
    \bibliographysupp{mendeley}
\end{appendices}

\end{document}